\relax
\documentclass[letterpaper]{article} 
\usepackage{aaai22}  
\usepackage{times}  
\usepackage{helvet}  
\usepackage{courier}  
\usepackage[hyphens]{url}  
\usepackage{graphicx} 
\urlstyle{rm} 
\usepackage{natbib}  
\usepackage{caption} 
\DeclareCaptionStyle{ruled}{labelfont=normalfont,labelsep=colon,strut=off} 
\frenchspacing  
\setlength{\pdfpagewidth}{8.5in}  
\setlength{\pdfpageheight}{11in}  
%
\usepackage{algorithm}
\usepackage[noend]{algorithmic}

%

\usepackage{newfloat}
\usepackage{listings}
\lstset{%
	basicstyle={\footnotesize\ttfamily},
	numbers=left,numberstyle=\footnotesize,xleftmargin=2em,
	aboveskip=0pt,belowskip=0pt,%
	showstringspaces=false,tabsize=2,breaklines=true}
\floatstyle{ruled}
\newfloat{listing}{tb}{lst}{}
\floatname{listing}{Listing}
%
%

\usepackage{algorithm}
\usepackage{algorithmic}
\usepackage{times}
\usepackage{soul}
\usepackage{url}
\usepackage[utf8]{inputenc}
\usepackage{amsmath}
\usepackage{amsthm}
\newtheorem{theorem}{Theorem}
\usepackage{graphicx,subfigure,caption}
\usepackage{caption}
\usepackage{comment}
\usepackage{enumitem}
\urlstyle{same}
\usepackage{amssymb}
\usepackage{xcolor}
\usepackage{tikz}
\usepackage{threeparttable}
\usepackage{enumitem}
\usepackage[most]{tcolorbox}
\usepackage{multirow}
\usepackage{pifont,amssymb}
\usepackage{lipsum,booktabs}
\usepackage[utf8]{inputenc}
\usepackage[english]{babel}
\usepackage[switch]{lineno}

\newtheorem{lemma}[theorem]{Lemma}

%
\pdfinfo{
/Title (Reverse Differentiation via Predictive Coding)
/Author (Tommaso Salvatori, Yuhang Song, Zhenghua Xu, Thomas Lukasiewicz, Rafal Bogacz)
/TemplateVersion (2022.1)
}

\setcounter{secnumdepth}{0} 

%


\title{Reverse Differentiation via Predictive Coding}
\author{
    Tommaso Salvatori\,\textsuperscript{\rm 1},
    Yuhang Song\,\textsuperscript{\rm 1,\,2\,}\thanks{Corresponding author.},
    Zhenghua Xu\,\textsuperscript{\rm 3},
    Thomas Lukasiewicz\,\textsuperscript{\rm 1},
    Rafal Bogacz\,\textsuperscript{\rm 2}
}
\affiliations{
    \textsuperscript{\rm 1} Department of Computer Science, University of Oxford, UK \\ 
    \textsuperscript{\rm 2} MRC Brain Network Dynamics Unit, University of Oxford, UK \\
    \textsuperscript{\rm 3} State Key Laboratory of Reliability and Intelligence of Electrical Equipment, \\Hebei University of Technology, Tianjin, China \\
    \{tommaso.salvatori, yuhang.song, thomas.lukasiewicz\}@cs.ox.ac.uk, rafal.bogacz@ndcn.ox.ac.uk, zhenghua.xu@hebut.edu.cn
}


%

\begin{document}

\maketitle

\begin{abstract}
Deep learning has redefined AI thanks to the rise of artificial neural networks, which are inspired by neuronal networks in the brain. Through the years, these interactions between AI and neuroscience have brought immense benefits to both fields, allowing neural networks to be used in a plethora of applications. 
Neural networks use an efficient implementation of reverse differentiation, called backpropagation (BP). This algorithm, however, is often criticized for its biological implausibility (e.g., lack of local update rules for the parameters).
Therefore, biologically plausible learning methods that rely on predictive coding (PC), a framework for describing information processing in the brain, are increasingly studied.
Recent works prove that these methods can approximate BP up to a certain margin on multilayer perceptrons (MLPs), and asymptotically on any other complex model, and that zero-divergence inference learning (Z-IL), a variant of PC, is able to exactly implement BP on MLPs.
However, the recent literature shows also that there is no biologically plausible method yet that can exactly replicate the weight update of BP on complex models.
To fill this gap, in this paper, we generalize (PC and) Z-IL by directly defining it on computational graphs, and show that it can perform exact reverse differentiation. What results is the first PC (and so biologically plausible) algorithm that is equivalent to BP in the way of updating parameters on any neural network, providing a bridge between the interdisciplinary research of neuroscience and deep learning.
Furthermore, the above results in particular also immediately provide a novel local and parallel implementation of BP. 
\end{abstract}

\section{Introduction}

In recent years, neural networks have achieved amazing results in multiple fields, such as image recognition~\cite{he2016deep,Krizhevsky2012}, natural language processing~\cite{Vaswani17,devlin-etal-2019-bert}, and game playing~\cite{silver17,Silver2016}. All the models designed to solve these problems share a common ancestor, multilayer perceptrons (MLPs), which are fully connected neural networks with a feedforward multilayer structure and a mapping function $\mathbb R ^n \rightarrow \mathbb R^m$. Although MLPs are able to approximate any continuous function~\cite{Hornik89} and theoretically can be used for any task, the empirical successes listed above show that more complex and task-oriented architectures perform significantly better than their fully connected ones. Hence, the last decades have seen the use of different layer structures, such as recurrent neural networks (RNNs)~\cite{Hochreiter97}, transformers~\cite{Vaswani17}, convolutional neural networks (CNNs), and residual neural networks~\cite{he2016deep}. Albeit diverse architectures may look completely different, their parameters are all trained using gradient-based methods, creating a need for a general framework to efficiently compute gradients. Computational graphs, which are decompositions of complex functions in elementary ones, represent the ideal solution for this task, as they generalize the concept of neural network. In fact, they allow the use of reverse differentiation to efficiently compute derivatives and hence update the parameters of the network. In deep learning, this technique is used to quickly propagate the output error through the network, and it is hence known under the name of \emph{error backpropagation} (\emph{BP})~\cite{rumelhart1986learning}. While being a milestone of the field, this algorithm has often been considered biologically implausible, as it does not follow the rules of biological networks in the brain to update the parameters and propagate information \cite{crick89}. Here, we use the term ``biologically plausible''  to refer to models that satisfy a list of minimal properties required by a possible neural implementation, namely, local computations and local plasticity (change in a connection weight depending only on the activity of the connected neurons) \cite{whittington2017approximation}.

An influential model of information processing in the brain, called \emph{predictive coding} (\emph{PC}) \cite{rao1999predictive}, is used to describe learning in the brain, and has promising theoretical interpretations, such as the minimization of free energy~\cite{bogacz2017tutorial,friston2003learning,friston2005theory,whittington2019theories} and probabilistic models~\cite{whittington2017approximation}. Originally proposed to solve unsupervised learning tasks, PC has been found to be successful also in supervised models~\cite{whittington2017approximation}, and its standard implementation, called \emph{inference learning} (\emph{IL})~\cite{whittington2017approximation}, has also been shown to be able to approximate asymptotically BP  on MLPs, and on any other complex model~\cite{millidge2020predictive}.
Furthermore, a~recent work has proved that PC can do exact BP on MLPs using a learning algorithm called \emph{zero-divergence inference learning} (\emph{Z-IL})~\cite{Song2020}. Z-IL is a biologically plausible method with local connections and local plasticity. While this exactness result is thrilling and promising, Z-IL has limited generality, as it has only been shown to hold for MLPs. 
Actually, a recent study shows that there are no published successful methods to train high-performing deep neural networks on difficult tasks (e.g., ImageNet classification) using any algorithm other than BP~\cite{Lillicrap20}. 
This shows the existence of a gap in our understanding of the biological plausibility of BP, which can be summarized as follows: there is an approximation result (IL), which has been shown to hold for any complex model~\cite{whittington2017approximation,millidge2020predictive}, and an exactness result (Z-IL), only proven for MLPs. 

In this work, we close this gap by analyzing the Z-IL algorithm, and generalize the exactness result to every complex neural network. Particularly, we start from analyzing the Z-IL algorithm on different architectures by performing one iteration of BP and one iteration of Z-IL on two identically initialized networks, and compare the two weight updates by computing the Euclidean distance. The results, reported in Table~\ref{tb:div} below, show two interesting things: first, they suggest that the exactness result holds for CNNs and many-to-one RNNs; second, they show that it does not hold for more complex architectures, such as residual and transformer neural networks. An analysis of the dynamics of the error propagation of Z-IL shows that the root of the problem is in the structure of the computational graph: in ResNet, for example, the skip connections design a pattern that does not allow Z-IL to exactly replicate the weight update of BP. In CNNs and RNNs, this does not happen. The main contributions of this paper are briefly summarized as follows. 
\begin{itemize}
    \item We show that Z-IL  is also able to exactly implement BP on CNNs and RNNs. Particularly, we give a direct derivation of the equations, and extend the proof of the original formulation of Z-IL on MLPs to CNNs and RNNs.
    
    \item We then generalize IL (and Z-IL) to work for every computational graph, and so any neural network. We also propose a variant of Z-IL that is directly defined on computational graphs, which we prove to be equivalent to BP in the way of updating parameters on any neural network.

\item  This results into a novel local  and parallel implementation of BP. We experimentally analyze the running time of Z-IL, IL, and BP on different architectures. The experiments show that Z-IL is comparable to BP in terms of efficiency, and several orders of magnitude faster than~IL. 

\item There are other impacts on machine learning beyond the above. 
In particular, the above novel formulation of BP in terms of IL may inspire 
other neuroscience-based alternatives to BP. 
Furthermore,  deep-learning-based approaches may actually be more closely related to information processing in the brain than commonly thought. 
    

\item At the same time, the first biologically plausible algorithm that exactly replicates the weight updates of BP on mapping functions of complex models may have a similarly big impact in neuroscience, as it shows that deep learning is actually highly relevant in neuroscience.
   \end{itemize}

\section{\label{sec:cg} Computational Graphs}

A \emph{computational graph} $G\,{=}\,(V,E)$, where $V$ is a finite non\-empty set of vertices, and $E$ is a finite set of edges, is a directed acyclic graph (DAG) that represents a complex function $\mathcal G$ as a composition of elementary functions. 
Every internal vertex $v_i$ is associated with one elementary function $g_i$, and represents the computational step expressed by $g_i$. Every edge pointing to this vertex represents an input of  $g_i$. 
For ease of presentation, the direction considered when using this notation is  the reverse pass (downwards arrows in Fig.~\ref{fig:cg}).
Furthermore, we call $e_{i,j} \in E$ the directed edge that starts at $v_i$ and ends at $v_j$.  
The first $n$ vertices $v_1,\dots, v_n$ are the leaves of the graph and represent the $n$ inputs of the function $\mathcal G$, while the last vertex $v^{\text{out}}$ represents the output of the function. We call $d_i$ the minimum distance from the output node $v^{\text{out}}$ to $v_i$ (i.e., the minimum number of edges separating $v_i$ and $v_{out}$). 
An example of a computational  graph for the function $\mathcal G(z_1,z_2) = (\sqrt{z_1} + z_2)^2$ is shown in Fig.~\ref{fig:cg}, where the arrows pointing upwards denote the forward pass, and the ones pointing downwards the reverse pass.
We  call $C(i)$ and $P(i)$ the indices of the child and parent vertices of~$v_i$, respectively. Hence, input nodes (nodes at the bottom) have no child vertices, and output nodes (nodes at the top) have no parent vertices.   We now briefly recall reverse differentiation, and so BP, on computational graphs, and we then newly define how to perform PC on computational~graphs.

\subsection{BP on Computational Graphs}

Let $\mathcal G\colon \mathbb R^n \rightarrow \mathbb R$ be a differentiable function, and $\{g_i\}$ be a factorization of $\mathcal G$ in elementary functions, which have to be computed according to a computational graph. Particularly, a computational graph $G=(V,E)$ associated with $\mathcal G$ is formed by a set of vertices $V$ with cardinality  $|V|$, and a set of directed edges $E$, where an edge $e_{i,j}$ is the arrow that points to $v_j$ starting from $v_i$. With every vertex $v_i \in V$, we associate an elementary function $g_i\colon\mathbb R^{k_i} \rightarrow \mathbb R$, where $k_i$ is the number of edges pointing to $v_i$. The choice of these functions is not unique, as there exist infinitely many ways of factoring $\mathcal G$. It hence defines the structure of a particular computational graph. Given an input vector $\bar z \in \mathbb R^n$, we denote by $\mu_i$  the value of the vertex $v_i$ during the forward pass. This value is computed iteratively as follows:
\begin{equation}
\mu_i = \begin{cases} 
z_i & \mbox{\!\!for } i \leq n\,;\\
g_i(\{\mu_j\}_{j \in C(i)}) & \mbox{\!\!for  } i > n\,.\\
\end{cases}
\label{eq:delta-recursive}
\end{equation}
We then have $\mathcal G(\bar z)= \mu_{|V|} = \mu_{out}$. The computational flow just described is represented by the upward arrows in Fig.~\ref{fig:cg}. We now introduce the classical problem of reverse differentiation, and show how it is used to compute the derivative relative to the output. Let $\bar z = (z_1, \dots, z_n)$ be an input (which in the case of MLPs will correspond to the weight parameters on the basis of which the output of the network is computed, as we will explain in the next section), and $\mathcal G(\bar z) \,{=}\, \mu_{out}$ be the output. Reverse differentiation is a key technique in machine learning and AI, as it allows to compute ${\partial \mathcal G}/{\partial z_i}$ for every $i<n$ efficiently. This is necessary to implement  BP at a reasonable computational cost, especially considering the extremely overparametrized architectures used today. This is done iteratively as follows: 
\begin{equation}
    \frac{\partial \mathcal G}{\partial \mu_i} = \sum\nolimits_{j \in P(i)} \frac{\partial \mathcal G}{\partial \mu_j} \cdot \frac{\partial \mu_j}{\partial \mu_i} = \sum\nolimits_{j \in P(i)} \frac{\partial \mathcal G}{\partial \mu_j} \cdot \frac{\partial g_j}{\partial \mu_i}.
\end{equation}
To obtain the desired formula for the input variables, it suffices to recall that $\mu_i = z_i$ for every $i \leq n$.

\smallskip\noindent
\textbf{Update of the leaf nodes: }Given an input $\bar z$, we consider a desired output $y$ for the function $\mathcal G$. The goal of a learning algorithm is to update the input parameters $(z_1, \dots, z_n)$ of a computational graph to minimize the quadratic loss $E = \frac 1 2 (\mu_{out} - y)^2$. Hence, the input parameters are updated by:
\begin{equation}
    \Delta z_i = - \alpha \cdot \frac{\partial  E}{\partial z_i} =  \alpha \cdot \sum\nolimits_{j \in P(i)} \delta_j \cdot\frac{\partial g_j}{\partial z_i},
    \label{eq:deltaz}
\end{equation}
where $\alpha$ is the learning rate, and ${\partial E}\,/\,{\partial z_i}$ is computed using reverse differentiation. We use the parameter $\delta_j$ to represent the error signal, i.e., the propagation of the output error among the vertices of the graph. It can be computed according to the following recursive formula:
\begin{equation}
\delta_{i} = \begin{cases} 
\mu_{out} - y & \mbox{if } i = |V|\,;\\
\sum\nolimits_{j \in P(i)} \delta_j \cdot\frac{\partial g_j}{\partial z_i} & \mbox{if } n<i<|V|.
\end{cases}
\label{eq:delta_err}
\end{equation}

\begin{figure}[t!]
\centering
	    \centering\vspace*{1ex}\hspace*{-2ex}\includegraphics[width=0.40\textwidth]{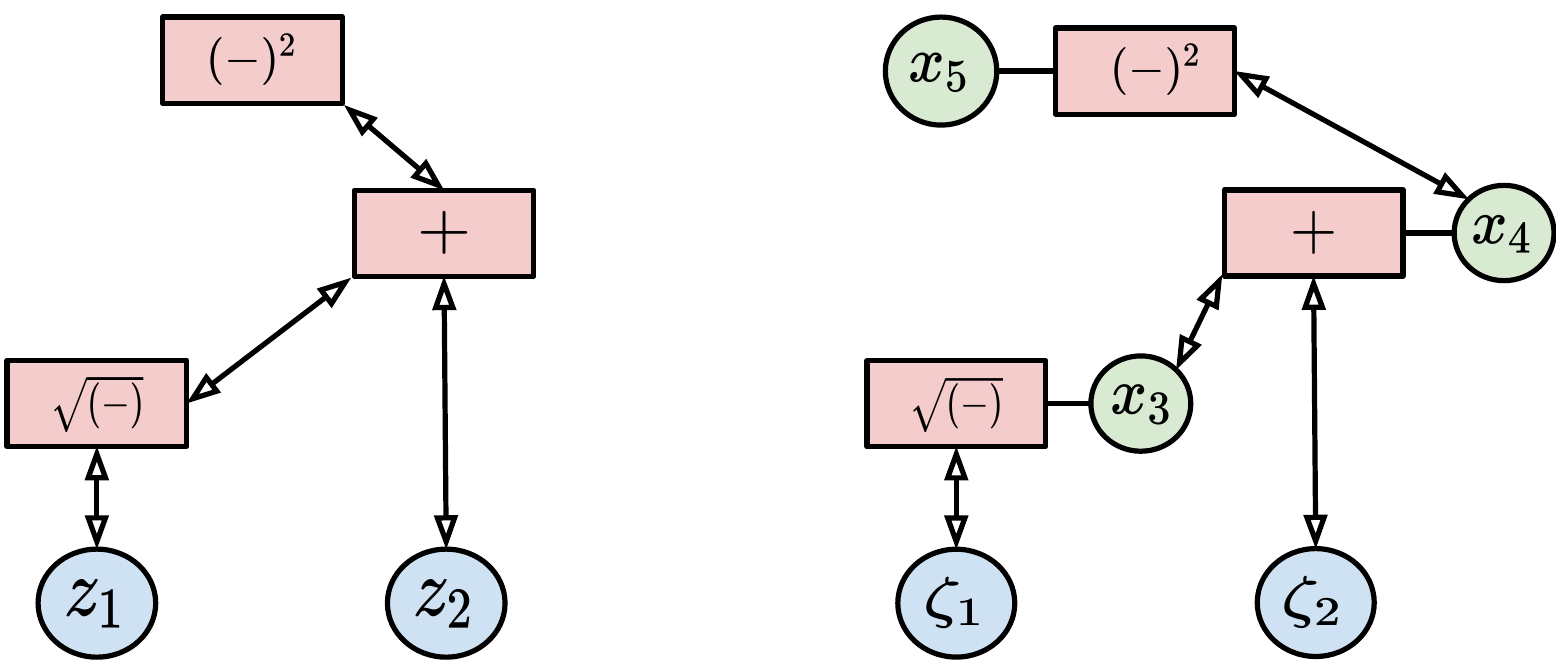}%
	\caption{Left: computational graph of the function $\mathcal G(z_1,$ $z_2)$ $=$ $(\sqrt{z_1} + z_2)^2$. Every internal vertex (red box) pictures its associated function $g_i$. Right: its predictive coding counterpart. The arrows pointing upwards are related to the feedforward pass.  The value nodes $x_i$ of the input neurons are set to the input of the function  ($\zeta_1$ and $\zeta_2$). Hence, we have omitted them from the plots to make the notation lighter. The same notation is adopted in later figures.}
	\label{fig:cg}\vspace*{-1ex}

\end{figure}

\subsection{IL on Computational Graphs}

\begin{figure*}[t]

	    \centering\vspace*{1ex}\includegraphics[width=0.75\textwidth]{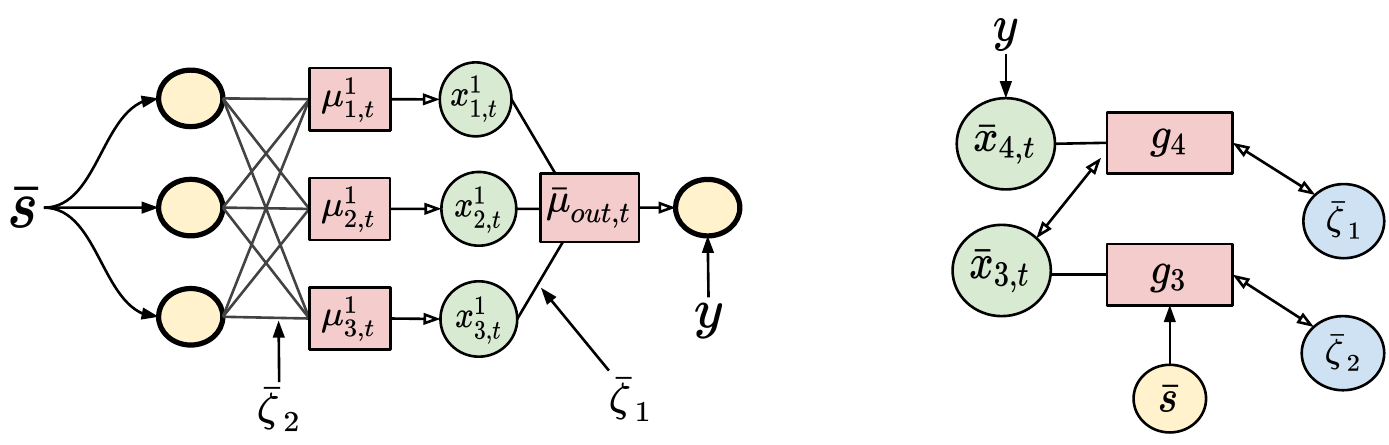}%
	\caption{Left: example of a 2-layer PCN. In these networks, it is possible to realize every computation locally using error nodes and value nodes in a biologically plausible way. For a more detailed discussion, we refer to \cite{whittington2017approximation}. Right: the corresponding computational graph.}
	\label{fig:zil-graph}\vspace*{-2.7ex}
\end{figure*}

We now show how the just introduced forward and backward passes change when considering a PC computational graph $G=(V,E)$ of the same function $\mathcal G$. A similar framework to the one that we are about to show has been developed in \cite{millidge2020predictive}. We associate with every vertex $v_i$, with $i>n$, a new time-dependent random variable $x_{i,t}$,  called \emph{value node}, and a prediction error $\varepsilon_{i,t}$. We denote a parameter vector (which for MLPs corresponds to weights) by $(\zeta_1, \dots, \zeta_n)$, so $\zeta_i$ in IL corresponds to $z_i$ in BP, but we use different symbols, as they may not be necessarily equal to each other. The values $\mu_i$ are computed as follows: for the leaf vertices, we have $\mu_{i,t} = \zeta_i$ and $\varepsilon_{i,t} = 0$  for $i \leq n$, while for the other values, we have
\begin{equation}
    \mu_{i,t} = g_i(\{x_{j,t}\}_{j \in C(i)}) \ \ \text{   and    } \ \ \varepsilon_{i,t} = \mu_{i,t} - x_{i,t}.
    \label{eq:mu-cg}
\end{equation}
This allows to compute the value $\mu_{i,t}$ of a vertex by only using information coming from vertices  connected to $v_i$. As in the case of PC networks, every computation is strictly local. The value nodes of the network are updated continuously  to minimize the following loss function, defined on all the vertices of $G$:
\begin{equation}{\small
F_t = \frac 1 2 \sum\nolimits_{i=1}^{|V|} (\varepsilon_{i,t})^2.
\label{eq:loss-cg}}
\end{equation}
The output $x_{out}$ of $\mathcal G(\bar \zeta)$ is then computed by minimizing this energy function through an inference process. The update rule is $\Delta x_{i,t} = - \gamma \,{\partial F_t}/{\partial x_{i,t}}$, where $\gamma$ is a small positive constant, called \emph{integration step}. Expanding this  gives:  
\begin{equation}
 \Delta x_{i,t} = - \gamma \cdot\frac{\partial F_t}{\partial x_{i,t}} = \gamma\cdot( \varepsilon_{i,t} - \sum\nolimits_{j \in P(i)} \varepsilon_{j,t} \cdot\frac{\partial \mu_{j,t}}{\partial x_{i,t}})\,.
 \label{eq:deltax-cg}
\end{equation}
Note that during the forward pass, all the value nodes $x_{i,t}$ converge to $\mu_i$, as $t$ grows to infinity. This makes the final output of the forward passes of  inference learning on the new computational graph equivalent to that of the normal computational graph.

\smallskip\noindent
\textbf{Update of the leaf nodes: } Let $ \bar \zeta$ be a parameter vector, and $y$ be a fixed target. To update the parameter vector and minimize the error on the output, we fix $x_{out} = y$. Thus, we have $\varepsilon_{out,t} = \mu_{out} - y$. By fixing the value node $x_{out,t}$, most of the error nodes can no longer decay to zero. Hence, the error $\varepsilon_{out,t}$ gets spread among the other error nodes on each vertex of the computational graph by running the inference process.  When the inference process has either converged, or it has run for a fixed number of iterations $T$, the parameter vector gets updated by minimizing the same loss function~$F_t$. Thus, we have: 
\begin{equation}
    \Delta \zeta_{i} = - \alpha \cdot\frac{\partial F_t}{\partial \zeta_{i}} = - \alpha \cdot\sum\nolimits_{j \in P(i)} \varepsilon_{j,t}\cdot\frac{\partial \mu_{j,t}}{\partial \zeta_{i}}.
    \label{eq:deltazeta}
\end{equation}
All computations are local (with local plasticity) in IL, and the model can autonomously switch between prediction and learning via running inference. The main difference between BP and IL on computational graphs is that the update of the parameters of BP is invariant of the structure of the computational graph: the way of decomposing the original function $\mathcal G$ into elementary functions does not affect the update of the parameters. This is not the case for IL, as different decompositions lead to different updates of the value nodes, and so of the parameters. However, it has been shown that, while following different dynamics, these updates are asymptotically equivalent \cite{millidge2020predictive}.

\section{Z-IL for MLPs}


\begin{algorithm}[t]
    \caption{Learning one training pair $({\bar s},y)$ with Z-IL}\label{algo:zil}
    \begin{algorithmic}[1]
    \REQUIRE $x_{out}$ is fixed to $y$; $\gamma=1$
    \STATE Initialize $x_{l,0} = \zeta_l$ for every leaf node; $x_{i,0} =\mu_{i,0}$ for every internal node
    \FOR{$t=0$ to $L$}
        \FOR {each vertex $v_i$}
            \STATE Update $x_{i,t}$ to minimize $F_{t}$ via Eq.~\eqref{eq:deltax-cg}
        \ENDFOR
        \IF{$t= l$}
            \STATE Update  $\bar \zeta_{l}$ to minimize $F_{t}$ via Eq.~\eqref{eq:deltazeta} \ \  
                {
                }
        \ENDIF
    \ENDFOR
    \end{algorithmic}
    \label{alg:zil}
\end{algorithm}

Recently, a new learning algorithm, called \emph{zero-diver\-gen\-ce inference learning} (\emph{Z-IL}), was shown to perform \emph{exact} backpropagation on fully connected predictive coding networks (PCNs), the PC equivalent of MLPs. Particularly, this result states that starting from a PCN and a MLP with the same parameters, the update of the weights after one iteration of BP is identical to the one given by one iteration of Z-IL.  We now provide a brief description of the original Z-IL algorithm. To be as close as possible to the original formulation of Z-IL, we adopt the same notation of that work, and index the layers starting from the output layer (layer $0$), and finishing at the input layer (layer $L$).

Let $\mathcal G(\bar z)$ be the function expressed by an artificial neural network (ANN), represented in Fig.~\ref{fig:zil-graph}. The leaf vertices  of its computational graph are the weight matrices, represented by the blue nodes in Fig.~\ref{fig:zil-graph}. Every weight matrix $\bar \zeta_l$ has the distance $l$ from the output vertex. 

This new algorithm differs from standard inference learning for the following reasons:

\begin{enumerate}
    \item The initial error $\varepsilon_{i,0}$ of every vertex $v_i$ is set to zero. This is done by performing a forward pass from an input vector $\bar s$ and setting $\mu_{i,0} = x_{i,0}$ for every vertex $v_i$.
    
    \item The weight parameters $\zeta_l$ of layer $l$ get only updated at time step  $t=l$, making the inference phase only last for~$L$ iterations. 
\end{enumerate}

\smallskip\noindent
\textbf{Update of the leaf nodes: } As stated, Z-IL introduces a new rule to update the weights of a fully connected PCN. Using the notation adopted for computational graphs, every leaf node~$\bar \zeta_l$ in Fig.~\ref{fig:zil-graph} gets updated at $t=l$. Alg.~\ref{alg:zil} shows how Z-IL performs a single update of the parameters when trained on a labelled point $(\bar s,y)$. For a detailed derivation of all the equations, we refer to the original paper \cite{Song2020}. The main theoretical result is as follows, formally stating that 
the update rules of BP and Z-IL are equivalent in MLPs.

\begin{theorem}\label{thm:1}
    \label{ther:final-equal} 
    Let $M$ be a fully connected PCN trained with Z-IL, and let $M'$ be its corresponding MLP, initialized as~$M$, and trained with BP. Then, given the same data point $s$ to both networks, we have 
   \begin{equation}
    \Delta \bar z_l = \Delta \bar \zeta_l 
    \end{equation}
    for every layer $l \geq 0$.
\end{theorem}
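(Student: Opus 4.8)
The plan is to reduce the claimed equality of weight updates to a single statement about error signals: that the prediction errors $\varepsilon_{i,t}$ produced by the Z-IL inference dynamics coincide, at the precise moment each weight is updated, with the backpropagated errors $\delta_i$ of Eq.~\eqref{eq:delta_err}. Since both the BP update \eqref{eq:deltaz} and the Z-IL update \eqref{eq:deltazeta} of a leaf $\bar\zeta_l$ are sums over its parents of (error signal) $\times$ (local derivative), once I show that at time $t=l$ the relevant $\varepsilon_{j,l}$ equal the corresponding $\delta_j$, and that the local derivatives $\partial\mu_{j,l}/\partial\zeta_l$ are evaluated at the same point as the BP derivatives $\partial g_j/\partial z_l$, the two updates become term-by-term identical and the theorem follows.

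The core is an induction on $t$ exploiting the layered structure of an MLP, where every vertex $v_i$ has a well-defined depth $d_i$ (all paths to the output have equal length). I would prove: for every $t$, (i) $\varepsilon_{i,t}=\delta_i$ for all $v_i$ with $d_i\le t$; and (ii) $\varepsilon_{i,t}=0$ and $x_{i,t}=\mu_{i,0}$ for all $v_i$ with $d_i>t$. The base case $t=0$ is the Z-IL initialization together with the clamping $x_{out}=y$, which gives $\varepsilon_{out,0}=\mu_{out}-y=\delta_{out}$. For the inductive step I would inspect \eqref{eq:deltax-cg} with $\gamma=1$ at three groups of vertices: the newly reached layer $d_i=t+1$ acquires exactly $\varepsilon_{i,t+1}=\sum_{j\in P(i)}\varepsilon_{j,t}\,\partial\mu_{j,t}/\partial x_{i,t}=\sum_{j\in P(i)}\delta_j\,\partial g_j/\partial\mu_i=\delta_i$ by the induction hypothesis and the BP recursion; the already reached layers $d_i\le t$ sit at a fixed point, since $\Delta x_{i,t}=\varepsilon_{i,t}-\sum_{j\in P(i)}\varepsilon_{j,t}\,\partial\mu_{j,t}/\partial x_{i,t}=\delta_i-\delta_i=0$; and the deeper layers $d_i>t+1$ do not move because all their parents still carry zero error. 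A point I would stress is that $\mu_{j,t}$ and the derivative $\partial\mu_{j,t}/\partial x_{i,t}$ depend only on the \emph{children} of $v_j$, i.e.\ on strictly deeper value nodes, which by (ii) are still frozen at their forward-pass values; hence these derivatives equal the BP derivatives evaluated along the forward pass.

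Finally I would combine the induction with the update schedule. A weight $\bar\zeta_l$ is updated at $t=l$; its parent activation lies at depth $l$, so by part (i) its error is already exact, $\varepsilon_{j,l}=\delta_j$, while the activations this weight multiplies lie one layer deeper and by part (ii) are still at their forward values, so $\partial\mu_{j,l}/\partial\zeta_l=\partial g_j/\partial z_l$. Substituting into \eqref{eq:deltazeta} and matching with \eqref{eq:deltaz} (reconciling the sign and learning-rate conventions) yields $\Delta\bar\zeta_l=\Delta\bar z_l$. The main obstacle is the inductive step's delicate timing: I must show simultaneously that, at the single instant $t=l$, the parent error has \emph{already} converged to $\delta_j$ while the multiplied activations have \emph{not yet} moved from their forward values. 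This synchronization hinges on every node of an MLP having an unambiguous depth, so that ``the error reaches layer $t$ at step $t$'' is well defined — and it is exactly this property that fails once the graph has skip connections or shared paths, which is why the clean equivalence is special to the layered case.
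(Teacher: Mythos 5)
Your overall architecture coincides with the paper's own proof (which, for the fully connected case, is carried out in the appendix as Claims 1--2 of the convolutional theorem, and again for levelled computational graphs): an induction establishing that the prediction error equals the backpropagated error at the moment the error front arrives at a layer, a freezing lemma stating that value nodes strictly ahead of the front still hold their forward-pass values (your part (ii) is exactly Lemma~\ref{lem:pcn-propagate-zero-app}, resp.\ Lemma~\ref{lem:1}), and a term-by-term matching of \eqref{eq:deltaz} and \eqref{eq:deltazeta} at the scheduled instant $t=l$, using the frozen activations to identify the local derivatives. Your closing remark on why unambiguous depth is essential and fails under skip connections also matches the paper's discussion.

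However, one piece of your induction is genuinely false as stated: the persistence half of invariant (i) ($\varepsilon_{i,t}=\delta_i$ for \emph{all} $d_i\le t$) and the accompanying fixed-point computation. Your freezing claim (ii) protects only vertices strictly ahead of the front, so the instant the front reaches depth $t{+}1$, the value nodes there move, which changes $\mu_j$ --- and hence $\varepsilon_j$ --- at depth $t$ one step later. Concretely, take the linear chain $\mu_{out}=\zeta_1 x_1$, $\mu_1=\zeta_2 s$, clamp $x_{out}=y$, and write $\delta=\mu_{out,0}-y$; with $\gamma=1$, Eq.~\eqref{eq:deltax-cg} gives $x_{1,1}=x_{1,0}-\zeta_1\delta$, hence the diagonal equality $\varepsilon_{1,1}=\zeta_1\delta=\delta_1$ holds as it should, but $\varepsilon_{out,1}=(1-\zeta_1^2)\,\delta\neq\delta_{out}$ and $\Delta x_{1,1}=\varepsilon_{1,1}-\zeta_1\varepsilon_{out,1}=\zeta_1^3\delta\neq 0$ (and concurrent weight updates only make this worse). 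So the already-reached layers are \emph{not} at a fixed point, and your induction as formulated does not close, since proving (i) at time $t{+}1$ for all $d_i\le t{+}1$ requires exactly this persistence. The flaw is repairable because nothing downstream uses more than the diagonal: the newly reached layer needs only its parents' errors at their own diagonal instant $t=d_j$, and each weight is updated exactly once, at its diagonal time, while the activations it multiplies are still frozen. This weaker diagonal invariant --- $\varepsilon_{i,t}=\delta_i$ only for $v_i\in S_t$, with freezing asserted only for $t<d_i$ --- is precisely what the paper proves (its Claim 1 together with Lemmas~\ref{lem:pcn-propagate-zero-app} and~\ref{lem:pcn-varepsilon-iterative-app}), deliberately saying nothing about the dynamics after the front has passed.
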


\section{Z-IL for CNNs and RNNs}

CNNs are a neural architecture that is highly used in computer vision, with a connectivity pattern that resembles the structure of animals' visual cortex. The parameters of a convolutional layer are contained in different \emph{kernels}, vectors that act on the input pattern via an operation called \emph{convolution}. Many-to-one RNNs, on the other hand, deal with sequential inputs, and consist of three different weight matrices: two are used recursively for the inputs and hidden layers, and the last one is the output layer.

While Theorem~\ref{thm:1} has only been proven for MLPs, the experimental results presented in Table~\ref{tb:div} suggest that the original formulation of Z-IL is also able to exactly replicate the weight update of BP on CNNs and RNNs. Inspired by our empirical findings, we  prove that the update rules of BP and Z-IL are equivalent in convolutional and recurrent networks, generalizing the result of Theorem~\ref{thm:1} to CNNs and RNNs: 

\begin{theorem}\label{thm:2}
    \label{ther:final-equal-2} 
    Let $M$ be a convolutional or a recurrent PCN trained with Z-IL, and let $M'$ be its corresponding model, initialized as~$M$, and trained with BP. Then, given the same data point to both networks, the update of all parameters performed by Z-IL on $M$ is equivalent to that of BP on $M'$.
\end{theorem}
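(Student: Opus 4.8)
The plan is to reduce the convolutional and recurrent cases to the layered analysis that already yields Theorem~\ref{thm:1}, by re-establishing the single invariant on which that result rests: under the zero-error initialization of Z-IL and with $\gamma=1$, the prediction error $\varepsilon_{i,t}$ carried by a value node coincides with the BP error signal $\delta_i$ of Eq.~\eqref{eq:delta_err} exactly when $t$ equals the distance $d_i$ of $v_i$ from the output. Once this holds, the leaf-update rule Eq.~\eqref{eq:deltazeta} evaluated at $t=d_i$ becomes formally identical to the BP update Eq.~\eqref{eq:deltaz}, so the two weight updates agree.

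First I would write out the computational graphs explicitly. For a CNN I would regard each convolution as a linear map, so that every $g_i$ is differentiable and the reverse pass through a layer is the usual transposed convolution; the graph stays strictly layered, and every computation node at layer $l$ has the single distance $d_i=l$. For a many-to-one RNN I would unroll the recurrence into a chain of hidden-state nodes feeding one output, so that each hidden state $h_\tau$ again has a unique distance to $v^{\text{out}}$ and the reverse pass is ordinary backpropagation through time. The structural property I would isolate is that in both graphs every directed path from a node to $v^{\text{out}}$ has the same length, so $d_i$ is unambiguous; this is exactly what fails for the skip connections of a ResNet, and it is what lets the error contributions reaching a node arrive ``in phase'' across inference iterations.

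I would then prove the invariant by induction on $t$. The base case is immediate, since the forward pass sets $\varepsilon_{i,0}=0$ for all $i\neq out$ and $\varepsilon_{out,0}=\mu_{out}-y=\delta_{out}$. For the step, substituting Eq.~\eqref{eq:deltax-cg} into $\varepsilon_{i,t}=\mu_{i,t}-x_{i,t}$ and using the unique-distance property shows that a node at distance $t$ first receives a nonzero error at time $t$, assembled precisely from its parents at distance $t-1$, whose errors already equal the matching $\delta_j$; the transposed-convolution and backpropagation-through-time form of $\partial\mu_{j,t}/\partial x_{i,t}$ makes this sum reproduce the BP recursion Eq.~\eqref{eq:delta_err} term by term. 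For the CNN this already settles the weight side: the shared kernel at layer $l$ feeds only into layer-$l$ nodes, so the single update at $t=l$ accumulates the full spatial sum that BP also computes for a tied kernel.

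The hard part will be the weight sharing of the RNN. There the recurrent matrix feeds into hidden states at several distinct distances, so its BP gradient is an accumulation over time steps, and Z-IL must reproduce it by a sequence of updates of the same matrix applied at the different time steps equal to those distances. I would show that each such update reads the locally correct error $\varepsilon_{j,t}=\delta_j$ and that their sum equals the backpropagation-through-time gradient. The genuinely delicate point is interference: because the matrix is tied across distances, an update performed at an early time step $l$ alters its value at every later use as well, thereby perturbing forward values at distances up to $d-1\geq l$ and threatening the invariant at later steps $t=d$. The clean wavefront argument that rescues the MLP and CNN cases—where an update at time $l$ perturbs forward values only at distances strictly below the error front—therefore breaks down precisely at the cross-distance tying. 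I expect the resolution to be that the parameter increments must be accumulated and applied only after inference terminates, so that the forward values driving the error recursion stay those of the initial pass while the staggered increments still sum to the correct total; pinning down this decoupling under the inline update of Alg.~\ref{alg:zil} is the main obstacle, and the step that genuinely departs from the proof of Theorem~\ref{thm:1}.
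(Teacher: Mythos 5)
Your CNN argument is essentially the paper's own: induction along the layer index, using the zero-error initialization and $\gamma=1$, gives $\varepsilon^{l}_{i,l}=\delta^{l}_{i}$ (Claim 1 of Theorem~\ref{thm:cnn}), and weight sharing is dispatched by noting that the kernel gradient is the sum of the circulant-matrix entry gradients over the tying set $\mathcal C^{l}_{a}$, so the entrywise equality $\Delta\theta^{l}_{i,j}=\Delta w^{l}_{i,j}$ transfers directly to the kernels (Claims 2--3). That half is sound. The RNN half, however, contains a genuine gap, and you name it yourself: you commit to a scheme in which the tied recurrent matrix is updated several times, once at each time step equal to the distance of each unrolled hidden state, and you then correctly observe that an early update of a parameter tied across distances perturbs the subsequent error propagation through that same parameter, so the wavefront invariant your induction needs is no longer available. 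Your proposed repair --- accumulate the increments and apply them only after inference terminates --- is stated as a conjecture and never established, and without it your scheme does not yield the BPTT gradient.

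The paper's proof takes a different route that makes this interference problem vanish rather than resolving it. The recurrent PCN is treated as a depth-two network (the proof sets $T=2$), and the tied matrices $\theta^{h}$ and $\theta^{x}$ are single leaves updated \emph{once}, by a rule that already sums over all unrolled positions, Eq.~\eqref{eq:weight_rnn_il}, mirroring the BP sums of Eq.~\eqref{eq:weight_rnn_bp}. Correspondingly, the key invariant is not ``the error at distance $d$ equals $\delta$ at time $t=d$'' but ``at the single time $t=1$, $\varepsilon^{k}_{i,1}=\delta^{k}_{i}$ holds for \emph{all} recurrence indices $k$ simultaneously,'' proved by induction on the sequence length $N$ via Lemma~\ref{lem:pcn-varepsilon-iterative-app-rnn}, whose recursion runs across the recurrence index at fixed inference time rather than across inference time (the precise point where the RNN case departs from the layered argument, as the lemma's proof notes). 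Because each tied parameter is touched exactly once, after all the errors are in place and while the value nodes still sit at their feedforward values ($x^{k}_{j,1}=x^{k}_{j,0}=y^{k}_{j}$), no staggered-update bookkeeping is ever needed. As it stands, your proposal proves the CNN case but leaves the RNN case resting on exactly the unproven decoupling step you flag; the fix is not to prove that step but to adopt the single accumulated update and the fixed-time error identity above.
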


The experimental results presented in Table~\ref{tb:div}, however, show that the original definition of Z-IL does \emph{not} generalize to more complex architectures. In what follows, we solve this problem by defining Z-IL directly on computational graphs, and prove a generalization of Theorems~\ref{thm:1}~and~\ref{thm:2}.


\begin{figure}[t]
\centering
\includegraphics[width=0.47\textwidth]{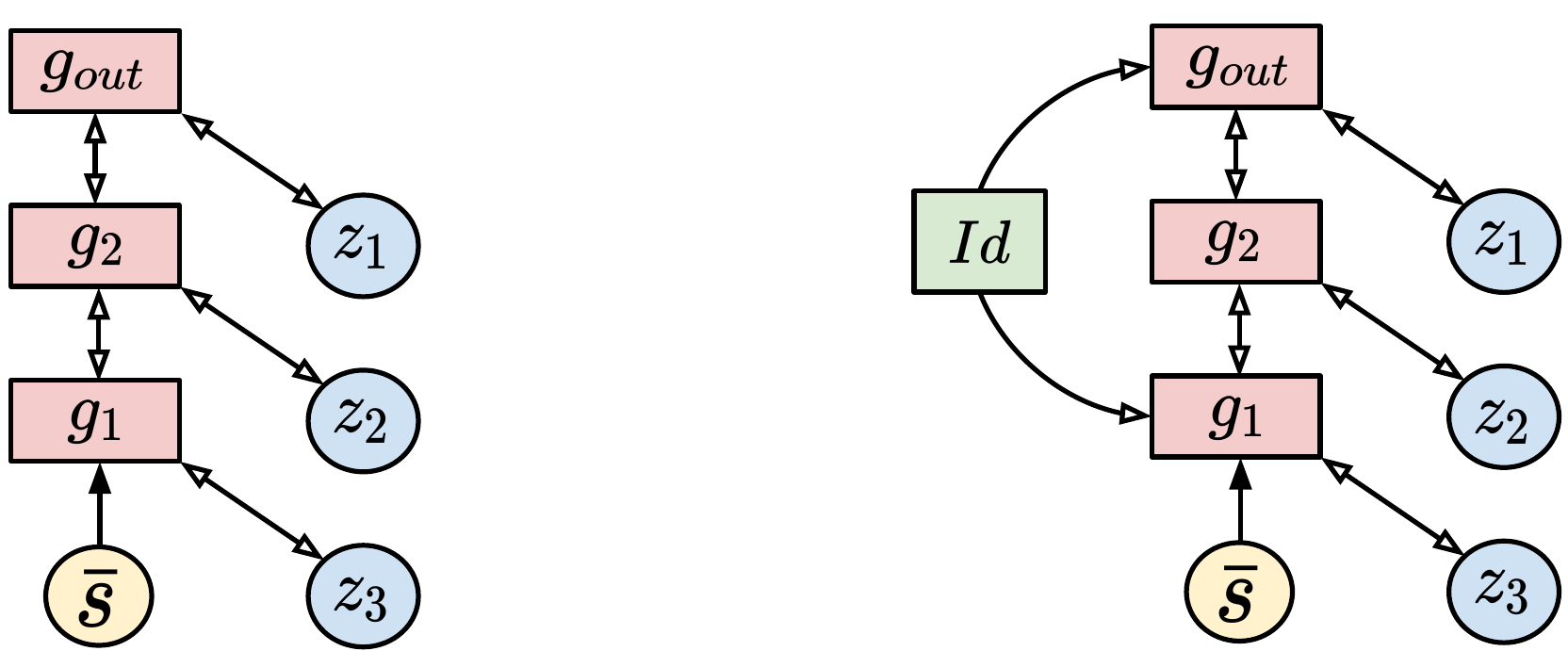}

	\caption{Left: computational graph of a 3-layer MLP with a residual connection, corresponding to the function ${\mathcal G}(s,$ $\bar z)= s z_3 + s z_3 z_2 z_1$. Right: an equivalent computational graph, with the addiction of an identity node.}
	\label{fig:res}
\end{figure}

\begin{table}[!t]\vspace*{-0.70ex}
  \centering
\resizebox{1.0\columnwidth}{!}{
    \begin{tabular}{@{}cccccc@{}}
    \toprule
    \cmidrule(r){1-4}
     & MLP & CNNs  & RNNs & ResNet18 & Transformer            \\
    \midrule
    Divergence:      & 0 & 0 & 0 & $4.53 \times 10^7$  & $7.29 \times 10^4$ \\
    \bottomrule
    \end{tabular}
  }\vspace*{-1.0ex}
\caption{Divergence between one update of weights of BP and Z-IL on different models, initialized in the same way.}
\label{tb:div}
\end{table}

\section{The Problem of Skip Connections}

In this section, we provide a toy example that shows how  Z-IL and BP behave on the computational graph of an ANN with a skip connection. Particularly, we show that it is impossible for Z-IL to replicate the same update of BP on all the parameters, unless the structure of the computational graph is altered. Consider the following function, corresponding to a simple MLP with a skip connection, represented in Fig.~\ref{fig:res}, left side:
\begin{equation}
    \mathcal G(s, \bar z) = s z_3 + s z_3 z_2 z_1.
\end{equation} 

\smallskip\noindent
\textbf{BP: } Given an input value $s$ and a desired target $y$, BP computes the gradient of every leaf node using reverse differentiation, and updates the parameters of $z_3$ as follows:
\begin{equation}
    \Delta z_3 = - \alpha \cdot \frac{\partial E}{\partial z_3} = \alpha \cdot \delta(z_1 z_2+1) s,
    \label{eq:update}
\end{equation}
where $\delta = (\mu_{out}-y)$, and $E$ is the quadratic loss defined on the output node. 

\smallskip\noindent
\textbf{Z-IL: } Given an input value $s$ and a desired target $y$, the inference phase propagates the output error through the graph via Eq.~\eqref{eq:deltazeta}. Z-IL updates $\zeta_3$ at $t=3$, as it belongs to the third hidden layer. This leads to the following:
\begin{equation}
    \Delta \zeta_3 = - \alpha \cdot \frac{\partial F_3}{\partial \zeta_3} = \alpha \cdot \delta \zeta_1 \zeta_2 s,
\end{equation}
where $\delta = \varepsilon_{out,0} =  (\mu_{out,0}-y)$, and $F_2$ is computed according to Eq.~\eqref{eq:loss-cg}. Note that this update is different from the one obtained by BP. We now analyze the reason of this mismatch and provide a solution.

\subsection{Identity Vertices}

The error signal propagated by the inference reaches $\zeta_3$ in two different moments: $t=2$ from the output vertex, and $t=3$ from $g_2$. Dealing with vertices that receive error signals in different moments is problematic for the original formulation of the Z-IL algorithm, as every leaf node only gets updated once. Furthermore, changing the update rule of  Z-IL does not solve the problem, as no other combination of updates produces the same weight update defined in Eq.~\eqref{eq:update}. To solve this problem, we then have to assure that every node of the graph is reached by the error signal in a single time step. This result is trivially obtained on computational graphs that are levelled DAGs, i.e., graphs where every directed path connecting two vertices has the same length. Here, the error reaches every vertex at a single, specific time step, no matter how complex the graph structure is. We now show how to make every computational graph levelled, without affecting the underlying function and the computations of the derivatives. 

Every elementary function $g_i$ can be written as a composition with the identity function, i.e., $g_i \circ Id$. Given two vertices $v_i$ and $v_j$ connected via the edge $e_{i,j}$, it is then possible to add a new vertex $v_k$  by splitting the edge $e_{i,j}$ into $e_{i,k}$ and $e_{k,j}$, whose associated function $g_k$ is the identity. This leaves the function expressed by the computational graph unvaried, as well as the computation of the derivatives, the forward pass, and the backward pass of BP. However, placing the identity vertices in the correct places, makes the computational graph levelled, allowing every vertex to receive the error signals at the same time step. Consider now the levelled graph of Fig.~\ref{fig:res}, right side, where an identity node has been added in the skip connection. The error signal of both~$g_1$ and $g_{out}$ reaches $g_2$ simultaneously at $t=2$. Hence, at $t=3$, Z-IL updates $\zeta_3$ as follows:
\begin{equation}
    \Delta \zeta_3 = - \alpha \cdot \frac{\partial F_3}{\partial \zeta_3} = \alpha \cdot \delta( \zeta_1 \zeta_2 + 1) s.
\end{equation}
If we have $\zeta_i = z_i$, this weight update is equivalent to the one performed by BP and expressed in Eq.~\eqref{eq:update}. Hence, Z-IL is able to produce the same weight update of BP in a simple neural network with one skip connection, thanks to a single identity vertex. In the next section, we generalize this result.

\section{Levelled Computational Graphs}

\begin{figure}[t]
\centering
\includegraphics[width=\linewidth]{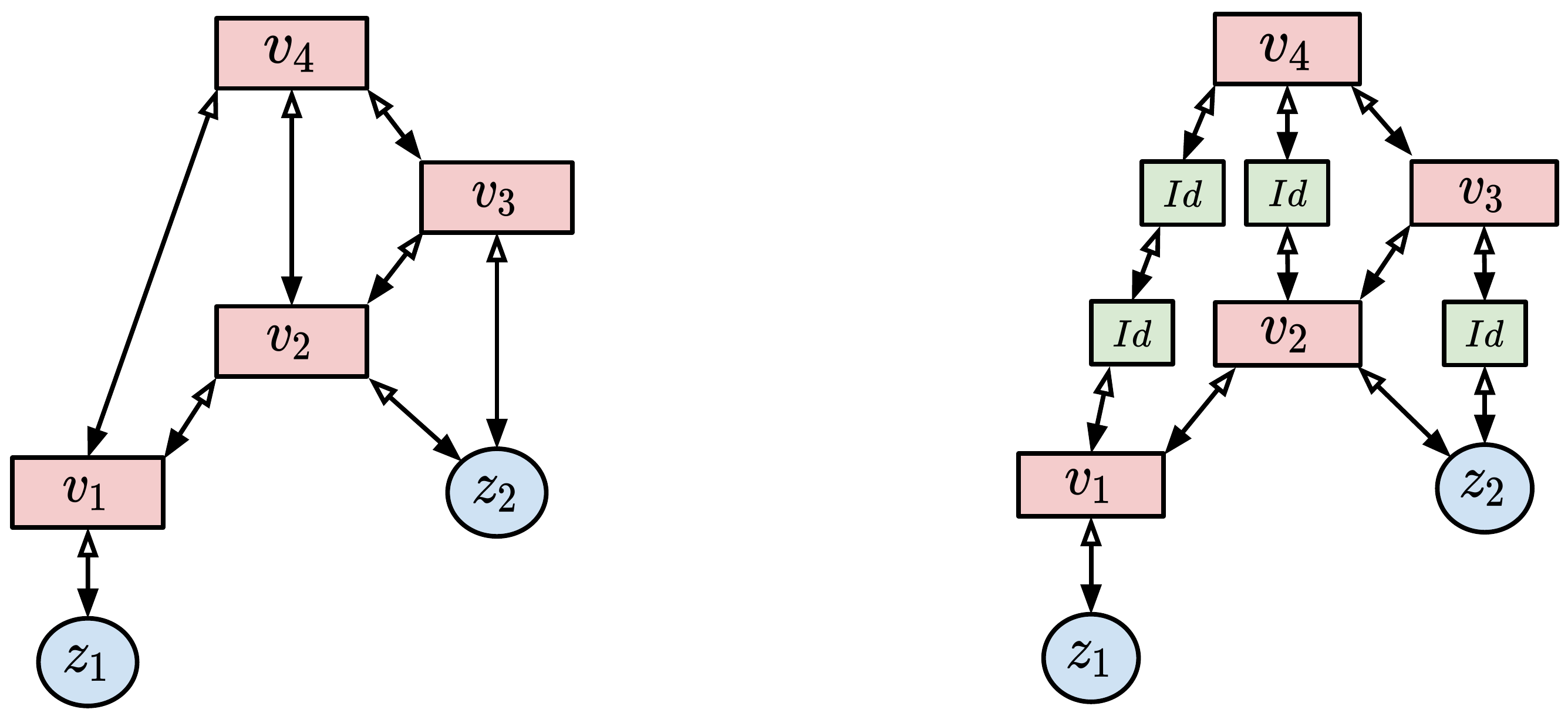}\vspace*{-1ex}

	\caption{Computational graphs of the same function $\mathcal G$. Left: the original graph $G$. Right: the transformed graph, with the identity vertices in green.}
	\label{fig:id}\vspace*{-1ex}
\end{figure}

In this section, we show that, given any computational graph, it is always possible to generate an equivalent, levelled version of it. Particularly, we provide an algorithm that performs this task by adding identity nodes. This leads to the first result needed to prove our main theorem: given any function $\mathcal G$, it is always possible to consider an equivalent, levelled, computational graph. This allows to partition the nodes of $G$ in a \emph{level structure}, where a level structure of a directed graph is a partition of the vertices into subsets that have the same distance from the top vertex.

Let $G$ be a computational graph, and $S_1,\dots,S_K$ be the family of subsets of $V$ defined as follows: a vertex $v_i$ is contained in $S_k$ if there exists a directed path of length $k$ connecting $v_i$ to $v_{out}$, i.e., 
\begin{equation}
    S_k = \{ v_i \in V | \ \exists \ \text{a path} \ (e_{out,j_1}, \dots, e_{j_{k-1},i})\}\,.
    \label{eq:sub}
\end{equation}
Hence, we have that $v_{out}$ is contained in $S_0$, its children vertices in $S_1$, and so on. In a levelled graph, every vertex is contained in one and only one of the subsets, and this partition defines its level structure. Let $D_i$ be the maximum distance between $v_{out}$ and the parent nodes of $v_i$, i.e., $D_i = \max_{v_j \in P(i)} d_j$. We now show for every DAG $G$ how to make every vertex $v_i$ to be contained in only one subset $S_k$, without altering the dynamics of the computational graph via the addition of identity nodes. 

Let $G$ be a DAG with root $v_0$, and let $(v_0,v_1,\dots,v_n)$ be a topological sort of the vertices of $G$. Starting from the root, for every vertex $v_j$, we replace every existing edge $e_{i,j}$ with the following path:
\begin{equation}
    v_i \rightarrow Id \rightarrow \dots \rightarrow Id \rightarrow v_j,
\end{equation}
which connects $v_i$ to $v_j$ via $d_j - D_i$ identity nodes. When this process has been repeated on all the vertices, we obtain a levelled DAG. This is equivalent to having every $v_i \in G$ that belongs to one and only one subset $S_k$, as every pair of disconnected paths between two vertices has the same length, thanks to the addition of identity vertices. Hence: 
\begin{theorem}
Given a function $\mathcal G : \mathbb R^n \rightarrow \mathbb R$ and any factorization of it expressed by elementary functions $\{g_i\}$, there exist a levelled computational graph $G=(V,E)$ that represents this factorization.
\end{theorem}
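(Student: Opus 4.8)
The plan is to prove the statement by verifying that the identity-vertex construction just described always terminates and produces a levelled graph that represents the same factorization $\{g_i\}$. I would separate the argument into two claims: first, that inserting identity vertices leaves the computed function, the forward pass, the reverse pass, and all partial derivatives unchanged; and second, that after the construction every vertex (original or inserted) lies in exactly one of the level sets $S_k$, which is the definition of levelledness. The first claim carries essentially no work: since $g_i = g_i \circ Id$ and $\partial(g_i \circ Id)/\partial x = \partial g_i/\partial x$, replacing an edge $e_{i,j}$ by a path $v_i \to Id \to \dots \to Id \to v_j$ changes neither $\mu_{\text{out}}$, nor the propagated error signals $\delta_j$, nor any $\partial \mathcal G/\partial z_i$, so the transformed graph still factors $\mathcal G$ as $\{g_i\}$.

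The substance is the second claim, which I would establish by induction along a topological order $(v_{\text{out}} = v_0, v_1, \dots)$ of the DAG $G$ (such an order exists because $G$ is acyclic with the single output root $v_{\text{out}}$). The key design choice is to assign to each vertex $v_i$ the level $\ell_i$ equal to its \emph{maximum} distance from $v_{\text{out}}$ rather than the minimum: because inserting nodes can only lengthen paths, every shorter directed path into a vertex must be padded up to the longest one, so the target level must be the maximal distance. I would then carry the invariant that, after the construction has processed $v_0, \dots, v_j$, every directed path from $v_{\text{out}}$ to each $v_k$ with $k \le j$ has the same length $\ell_k$, where $\ell_{\text{out}} = 0$ and $\ell_k = 1 + \max_{v_i \in P(k)} \ell_i$.

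The inductive step is where the work lies. When $v_j$ is processed, each parent $v_i \in P(j)$ has already been fixed, so $\ell_i$ is well defined; I pad the edge $e_{i,j}$ to length $\ell_j - \ell_i$ by inserting $\ell_j - \ell_i - 1$ identity vertices. This count is non-negative precisely because the edge $v_i \to v_j$ forces $\ell_j \ge \ell_i + 1$ for every parent, and it equals $0$ for the deepest parent, so the original edge is simply kept there. After padding, any path from $v_{\text{out}}$ to $v_j$ splits as a path to some parent $v_i$ (length $\ell_i$, equal for all such paths by the invariant) followed by the padded edge (length $\ell_j - \ell_i$), giving total length $\ell_j$; and each inserted identity vertex lies on a unique path, hence receives a single level as well. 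Since the only edges altered terminate at $v_j$, no path among $v_0, \dots, v_{j-1}$ is touched, so the invariant is preserved. Once it holds for all vertices, every vertex belongs to exactly one $S_k$, and combined with the first claim this yields the required levelled computational graph.

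I expect the main obstacle to be the bookkeeping of this inductive step rather than any deep idea: one must simultaneously guarantee (a) that the number of inserted identity vertices is never negative, which is exactly what forces the target level to be the maximum distance and forces processing in topological order so that each parent's level is finalized before its outgoing edges are padded, and (b) that equalizing the incoming edges of the current vertex cannot disturb the levels already established for its ancestors. Making these two points precise — and checking that the padded lengths indeed realize $\ell_k = 1 + \max_{v_i \in P(k)} \ell_i$ consistently across all incoming edges of a vertex — is the heart of the proof; everything else is routine.
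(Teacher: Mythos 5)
Your proposal is correct and follows essentially the same route as the paper: both level the graph by padding each edge with identity vertices along a topological order, relying on $g_i = g_i \circ Id$ to leave the represented function, the forward pass, and all derivatives untouched. The paper only sketches this construction and asserts levelledness in one sentence (its printed padding count $d_j - D_i$, phrased via \emph{minimum} distances, only makes sense under the max-distance reading you adopt), so your explicit topological-order invariant, the recursion $\ell_j = 1 + \max_{v_i \in P(j)} \ell_i$, and the check that the inserted-vertex counts are non-negative supply exactly the rigor the paper omits.
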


The above theorem shows that every neural network can be expressed as a levelled computational graph, and hence that every result shown for levelled computational graphs can be naturally extended to every possible neural network. 
\begin{algorithm}[t]
    \caption{Generating a levelled DAG $G'$ from $G$}\label{algo:level}
    \begin{algorithmic}[1]
    \REQUIRE $G$ is a DAG, and $(v_0,\dots,v_n)$ a topological sort.
        \FOR{every $j$ in $(0,n)$ included}
            \FOR{each vertex $v_i$ in $P(j)$}
                \STATE Add $(d_j - D_i)$ identity vertices to $e_{i,j}$
            \ENDFOR
        \ENDFOR
    \end{algorithmic}
    \label{algo:levelled}
\end{algorithm}

\section{Z-IL for Levelled Computational Graphs}\label{sec:zil}

In this section, we show that a generalized version of Z-IL allows PCNs to do exact BP on any computational graph.

Let $G\,{=}\,(V,E)$ be the levelled computational graph of a function $\mathcal G : \mathbb R^n \rightarrow \mathbb R$, and consider the partition of $V$ via its level structure $S_1,\dots,S_K$. We now present a variation of IL for computational graphs that allows predictive coding to exactly replicate the parameter update of BP, called Z-IL for computational graphs. This algorithm is similar to IL, but the following two differences are introduced:

\smallskip\noindent
\textbf{Forward pass:} Differently from IL, where input and output are presented simultaneously, Z-IL first presents the input vector to the function, and performs a forward pass. Then, once the values $\mu_{i}$ of all the internal vertices have been computed, the value nodes are initialized to have zero error, i.e., $ x_{i,0} \,{=}\, \mu_{i}$, and the output node is set equal to the label $y$. This is done to emulate the behaviour of BP, which first computes the output vector, and then compares it to the label.

\smallskip\noindent
\textbf{Update of the leaf nodes:} Instead of continuously running inference on all the leaf nodes of $G$, we only run it on the internal vertices. Then, at every time step $t$, we update all the leaf nodes  $v_i \in S_t$, if any. More formally, for every internal vertex $v_i$, training continues as usual via Eq.~\eqref{eq:deltax-cg}, while leaf nodes are updated according to the following equation:
\begin{equation}
    \!\Delta \zeta_{i,t}
    = \begin{cases}
    \gamma \cdot \sum_{j \in P(i)} \varepsilon_{j,t}\cdot\frac{\partial \mu_j}{\partial \zeta_i} &  \!\!\mbox{if } v_i \in S_t \\
    0  &  \!\!\mbox{if } v_i \not\in S_t.
    \end{cases}
    \label{eq:zeta}
\end{equation}%

This shows that one full update of the parameters requires $t=K$ steps. Note that for multilayer networks, $K$ is equal to the number of layers $L$. Overall, the functioning of Z-IL for computational graphs is summarized in Algorithm~\ref{algo:cg-ZIL}. We now show that this new formulation of Z-IL is able to replicate the same weight update of BP on any function $\mathcal G$.
    
\begin{table*}[t]

  \vspace*{0.75ex}
  \centering
  \resizebox{1.0\textwidth}{!}{
    \begin{tabular}{@{}cccccc@{}}
    \toprule
    \cmidrule(r){1-5}
    Method  & MLP & AlexNet~\cite{Krizhevsky2012} & RNN & ResNet18~\cite{he2016deep} & Transformer~\cite{Vaswani17}   \\
    \midrule
    BP        & $3.72$    & $8.61$     & $5.64$ & $12.43$ & $20.43$  \\
    IL        & $594.25$  & $661.53$  & $420.01$ & $1452.34$ & $1842.64$  \\
    Z-IL        & $3.81$    & $8.86$     & $5.67$ & $12.53$ & $20.53$  \\

    \bottomrule
    \end{tabular}
  }\vspace*{-1.5ex}
\caption{Average running time of each weights update (in ms) of BP, IL, and Z-IL for computational graphs.}
\label{tb:time}
\end{table*}

\begin{algorithm}[t]
    \caption{Z-IL for computational graphs.}\label{algo:cg-ZIL}
    \begin{algorithmic}[1]
    \REQUIRE $x_{out}$ is fixed to a label $y$, 
    \REQUIRE $\{S_k\}_{k=0, \dots, K}$ is a level structure of $G(V,E)$;
    \REQUIRE $x_{i,0} =\mu_{i,0}$ for every internal node.
    \FOR{$t=0$ to $K$}
        \STATE Update each $x_{i,t}$ to minimize $F_{t}$ via Eq.~\eqref{eq:deltax-cg}
        \STATE Update each $\zeta_{i,t} \in S_t$ to minimize $F_{t}$ via Eq.~\eqref{eq:deltazeta} \ \  
    \ENDFOR
    \end{algorithmic}
\end{algorithm}

\begin{theorem}
Let $(\bar z,y)$ and $(\bar \zeta,y)$  be two points with the same label $y$, and $\mathcal G: \mathbb R^n \rightarrow \mathbb R$ be a function. Assume that the update $\Delta \bar z$ is computed using BP, and the update $\Delta \bar \zeta$ using Z-IL with $\gamma = 1$. 
Then, if $\bar z = \bar \zeta$, and we consider a levelled computational graph of $\mathcal G$, we have 
\begin{equation}
    \Delta z_i = \Delta \zeta_i
\end{equation}
for every $i \leq n$.
\end{theorem}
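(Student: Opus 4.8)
The plan is to exploit the preceding leveling theorem and assume, without loss of generality, that $G$ is levelled: every directed path from $v_{out}$ to $v_i$ has the same length $d_i$, so the level sets $S_0,\dots,S_K$ partition $V$ with the property that every parent of a vertex in $S_k$ lies in $S_{k-1}$ and every child in $S_{k+1}$. Recall that BP assigns to each vertex the error signal $\delta_i$ of \eqref{eq:delta_err} and updates leaves by $\Delta z_i=\alpha\sum_{j\in P(i)}\delta_j\,\partial g_j/\partial z_i$. Since $\bar z=\bar\zeta$, the PCN forward pass coincides with BP's, so the forward values $\mu_i$ agree; the whole argument reduces to showing that, at the moment a leaf $v_i$ is updated, the factors $\varepsilon_{j,t}$ and $\partial\mu_j/\partial\zeta_i$ in \eqref{eq:zeta} equal the BP factors $\delta_j$ and $\partial g_j/\partial z_i$.

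First I would prove the central lemma by induction on the level index $k$: for every internal vertex $v_i\in S_k$ one has $x_{i,t}=\mu_i$ (equivalently $\varepsilon_{i,t}=0$) for all $t<k$, and $\varepsilon_{i,k}=\delta_i$. The base case $k=0$ is the clamped output, where $\varepsilon_{out,0}=\mu_{out}-y=\delta_{|V|}$. For the inductive step I use that, with $\gamma=1$, \eqref{eq:deltax-cg} telescopes to $x_{i,t+1}=\mu_{i,t}-\sum_{j\in P(i)}\varepsilon_{j,t}\,\partial\mu_{j,t}/\partial x_{i,t}$, since $x_{i,t}+\varepsilon_{i,t}=\mu_{i,t}$. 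For $v_i\in S_k$ all parents lie in $S_{k-1}$, so by the induction hypothesis their errors vanish until $t=k-1$ and equal $\delta_j$ exactly at $t=k-1$; substituting into the telescoped update gives $x_{i,k}=\mu_i-\sum_{j\in P(i)}\delta_j\,\partial g_j/\partial\mu_i=\mu_i-\delta_i$, whence $\varepsilon_{i,k}=\mu_{i,k}-x_{i,k}=\delta_i$. The two places where the \emph{instantaneous} derivatives $\partial\mu_{j,t}/\partial x_i$ must be converted into the \emph{forward-pass} derivatives $\partial g_j/\partial\mu_i$ used by BP are handled by one freezing observation: the inputs of $g_j$ are the children of $v_j$, which live one level deeper and, by the hypothesis applied to that level, have not yet moved at the relevant time, so the Jacobian factors are evaluated exactly at the forward activations $\mu$.

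With the lemma in hand the leaf updates follow. A leaf $v_i$ fires at the step at which its parent function vertices carry their exact BP errors; by the lemma this is precisely the level of those parents, and \eqref{eq:zeta} then reads $\Delta\zeta_i=\sum_{j\in P(i)}\delta_j\,\partial\mu_j/\partial\zeta_i$. At that same step the remaining inputs of each $g_j$ (the siblings of $v_i$, one level deeper) are still pinned at their forward values, so $\partial\mu_j/\partial\zeta_i=\partial g_j/\partial z_i$. Comparing with the BP expression and using $\bar z=\bar\zeta$ (so that every factor coincides and $\gamma=1$ plays the role of $\alpha$) yields $\Delta z_i=\Delta\zeta_i$ for every $i\le n$.

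The hard part is the simultaneity/freezing argument rather than any single calculation: I must guarantee that the nonlinear Jacobian factors generated by the inference dynamics coincide with those BP evaluates at the forward activations. This is exactly what leveling buys, since it forces the error ``wavefront'' to reach each vertex at one sharply defined time step while every deeper value node is still pinned at $\mu$; on a non-levelled graph the output error already drifts after $t=0$, and a vertex reached by error signals at two different times cannot be matched, which is precisely why the identity-node construction is needed. A secondary point of care is ordering the induction so that the frozen-value claim for level $k+1$, invoked when treating level $k$, is established before it is used.
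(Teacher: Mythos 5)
Your proposal is correct and takes essentially the same route as the paper's own proof: your freezing observation is the paper's Lemma~\ref{lem:1}, your telescoped update with $\gamma=1$ is exactly the computation in its Lemma~\ref{lem:2}, and your level-indexed induction followed by the leaf-update comparison reproduces the paper's Claim~1/Claim~2 structure. The only differences are organizational --- you merge the freezing and error-equality statements into a single induction on the level index where the paper inducts on graph depth with separate lemmas, and you pin the leaf-firing time to the parents' level, which is the reading consistent with the paper's MLP formulation of Z-IL and arguably cleaner than the paper's own slightly ambiguous indexing in Claim~2.
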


This proves the main claims made about Z-IL: (i) exact BP and exact reverse differentiation can be made biologically plausible on the computational graph of \emph{any} function, and (ii)~Z-IL is a learning algorithm that allows PCNs to perfectly replicate the dynamics of BP on any function. Particularly, adding identity nodes to the computational graphs to produce equivalence to BP has non-trivial implications: it shows that the key difference between the PC model of learning in the brain and BP  lies in the synchronization of error propagation. This offers a novel perspective to investigate the gap between BP and neural models.

\section{Experiments}

In the above sections, we have theoretically proved that the proposed generalized version of Z-IL is equivalent to BP on every possible neural model. Multiple experiments, reported in the supplementary material, further confirmed this: the divergences of weight updating between BP and Z-IL are always zero on all tested neural networks.
So, there is no need for detailed experimental evaluation for the equivalence. In this section, we will complete the picture of this work with experimental studies to evaluate the computational efficiency of Z-IL, and quantitatively compare it with those of BP and IL. Particularly, we perform extensive experiments on different architectures, testing multiple models per architecture. The results of BP, IL, and Z-IL, averaged over all the experiments per model, are reported in Table~\ref{tb:time}, and a detailed description of the experiments, as well as all the parameters needed to reproduce the results, are provided in the supplementary material.

\subsection{Results and Evaluations}
As shown in Table~\ref{tb:time}, the computational time of Z-IL is very close to that of BP, and orders of magnitude lower than that of IL. This proves that Z-IL is an efficient alternative to BP in practice, instead of just being a theoretical tool. The high computational time of IL is due to the large number of iterations $T$. For example, for small MLPs, $T$ is set to $20$ in~\cite{whittington2017approximation}, and as larger models require higher numbers of iterations to converge, $T$ is set between $100$ and $200$ for mid-size architectures, such as RNNs and CNNs in~\cite{millidge2020predictive}.
Note that the approximation results of these works are achieved with fixed values of $T$, and not at convergence. Z-IL explains the above findings, as we show that strict equivalence can be achieved with a small number of inference steps; one just needs to satisfy the proposed conditions properly.

\section{Related Work}

PC is an influential theory of cortical function in theoretical and computational neuroscience, as it provides a computational framework, able to describe information processing in multiple brain areas \cite{friston2005theory}.
It has appealing theoretical interpretations, such as free-energy minimization~\cite{bogacz2017tutorial,friston2003learning,friston2005theory} and variational inference of  probabilistic models~\cite{whittington2017approximation}.
There are also variants of PC developed into different biologically plausible process theories specifying cortical microcircuits that potentially implement such theories~\cite{bastos2012canonical,kanai2015cerebral,shipp2016neural}.
Moreover, the central role of top-down predictions is consistent with the ubiquity and importance of top-down diffuse connections between cortical areas. 
PC is then consistent with many known aspects of neurophysiology, and has been translated into biologically plausible process theories which specify potential cortical microcircuits which could implement the algorithm.
Due to this solid biological grounding, PC is also attracting interest in machine learning recently, especially focusing on finding the links between PC and BP \cite{whittington2017approximation}.

{Biologically plausible approximations to BP} have been intensively studied, because on the one hand, the underlying principles of BP are unrealistic for an implementation in the brain~\cite{crick89, lillicrap2016random,Lillicrap20}, but on the other hand, BP outperforms all alternative discovered frameworks~\cite{baldi2016theory}. Bridging the gaps between BP and learning in biological neuronal networks of the brain (learning in the brain, for short, or simply BL) has been a major open question for both neuroscience and machine learning. \cite{whittington2019theories,rao2020backpropagation,kriegeskorte2015deep,kietzmann2018deep,richards2019deep}.
However, earlier biologically plausible approximations to BP have not been shown to scale to complex problems, such as learning colored images \cite{lillicrap2016random,o1996biologically,kording2001supervised,bengio2014auto,lee2015difference,nokland2016direct,scellier2017equilibrium,scellier2018generalization,lin2018dictionary,illing2019biologically}.
More recent works show the capacity of scaling up biologically plausible approximations to the level of BP \cite{xiao2018biologically,obeid2019structured,nokland2019training,amit2019deep,aljadeff2019cortical,akrout2019using,wang2020supervised}. 
However, to date, none of the earlier or recent models 
has bridged the gaps at a degree of demonstrating an equivalence to BP, though some of them \cite{lee2015difference,whittington2017approximation,nokland2019training,ororbia2017learning,millidge2020predictive} demonstrate that they  approximate BP, or are equivalent to BP  under unrealistic restrictions \cite{xie2003equivalence,sacramento2018dendritic}.

\section{Summary and Outlook}

The gap between machine learning and neuroscience is currently opening up: on the one hand, recent neural architectures trained by BP are invented with impressive performance in machine learning; on the other hand, models in neuroscience can only match the performance of BP in small-scale problems. There is thus a crucial open question of whether the advanced architectures in machine learning are actually relevant for neuroscientists. In this paper, we show that all these advanced architectures can be trained with one of their neural models: the proposed generalization of Z-IL is always equivalent to BP, with no extra restriction on the mapping function and the type of neural networks. (Previous works  only showed that IL approximates BP in single-step weight updates under unrealistic and non-trivial requirements.) 
Also, the computational efficiency of Z-IL is comparable to that of BP, and is several orders of magnitude better than IL. 
 Hence, we obtain a novel local  and parallel implementation of BP. Moreover, the novel formulation of BP in terms of IL may inspire 
other neuroscience-based alternatives to BP. The exploration of such alternatives to BP are a topic of our ongoing research. 
Furthermore, our results show that deep-learning-based models may actually be more closely related to information processing in the brain than commonly thought, which may have a big impact on both the machine learning and the neuroscience community. 


\section{Acknowledgments}

This work was supported by the Alan Turing Institute under the EPSRC grant EP/N510129/1, by the AXA Research Fund, by the EPSRC grant EP/R013667/1, and by the EU TAILOR grant. We also acknowledge the use of the EPSRC-funded Tier 2 facility JADE (EP/P020275/1) and GPU computing support by Scan Computers International Ltd.
This work was also supported by the China Scholarship Council under the State Scholarship Fund, by J.P.\ Morgan AI Research Awards, by the UK Medical Research Council under the grant MC\_UU\_00003/1, by the National Natural Science Foundation of China under the grant 61906063, by the Natural Science Foundation of Hebei Province, China, under the grant F2021202064, by the Natural Science Foundation of Tianjin City, China, under the grant 19JCQNJC00400, and by the ``100 Talents Plan'' of Hebei Province, China, under the grant E2019050017.

\bibliography{references}

\newpage
\appendix

\newpage

\newpage

\begin{figure*}[h]

	    \centering\vspace*{1ex}\includegraphics[width=1.0\textwidth]{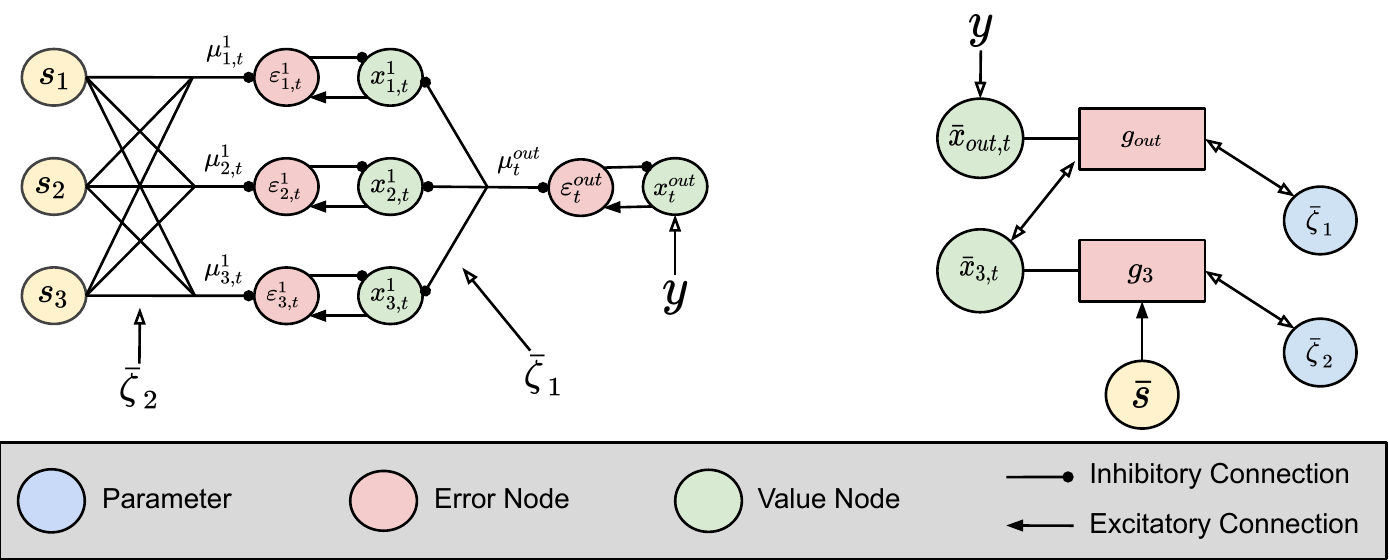}%
	\caption{Left: example of a 2-layer PCN with inhibitory and excitatory connections. In these networks, it is possible to realize every computation locally using error nodes and value nodes in a biologically plausible way. For a more detailed discussion, we refer to \cite{whittington2017approximation}. Right: the corresponding computational graph.}
	\label{fig:bio}

\end{figure*}

\section{Biological Plausibility of PCNs}

The term \emph{biologically plausible} has been extensively used in the computational neuroscience literature, often with different meanings. As mentioned in the introduction, in this paper, biological plausibility concerns a list of minimal properties that a learning rule should satisfy to have a possible neural implementation: computations should be local (i.e., each neuron adjusts its activity just based on input that it receives from connected neurons), and plasticity should also be local (i.e., the change in a
connection weight depends only on the activity of the connected neurons). BP is biologically implausible mainly due to the fact that it lacks locality of plasticity. In BP, the change in each synaptic weight during learning is calculated as a global function of activities and weights of many neurons (often not connected with the synapse being modified). In the brain, however, each neuron in the network must perform its learning algorithm locally, without external influence, and the change in each synaptic weight must depend on just the activity of the neurons connected via this synapse.

To improve the clarity of the presentation, we have decided to describe PCNs using value nodes $\bar x^l_{i,t}$ and their predictions $\bar \mu^l_{i,t}$. This presentation, however, does not fully highlight the reasons that make PCNs trained with IL and Z-IL biologically plausible. We now address this problem. It is in fact possible to represent PCNs using only local information, which gets propagated through the network via inhibitory and excitatory connections. Particularly, the value nodes of a layer $l$ are connected to the error nodes of layer $l-1$ via inhibitory connections. The same holds for computational graphs. Graphical representations of a 2-layer PCN and its computational graph are given in Fig.~\ref{fig:bio}.

Regarding the identity nodes, they are a ``trick'' that we have implemented to delay the signal between two neurons. However, there is an equivalent concept in neurobiology: it is known that different connections between neurons have different transmission delays, and this principle is widely used by the brain in its computations (e.g., it has been suggested that to detect movement in particular directions, visual neurons receive inputs from different locations on the retina with different delays).
So, the most plausible way of interpreting the identity nodes is not as physical neurons, but as transmission delays between connected neurons. Regarding the design, this delay mostly depends on the length of the dendrite between the synapse and the cell body. As presynaptic neurons can produce synapses on multiple locations of the dendrite, it is possible to select the location of the delayed signal by strengthening these particular synapses/delays through a learning~process.

\section{Empirical Validation of the Theorems}\label{sec:val}

\begin{table*}[ht]
  \caption{Euclidean distance of the weights after one training step of Z-IL (and variations), and BP.}
  
  \medskip 
   \vspace*{-1.5ex}
  \centering
  \resizebox{0.9\textwidth}{!}{
    \begin{tabular}{@{}ccccc@{}}
    \toprule
    \cmidrule(r){1-4}
    Model & Z-IL  & Z-IL without Level-dependent Update  & Z-IL with $\varepsilon^l_{i,0} \neq 0$ & Z-IL with $\gamma = 0.5$ \\
    \midrule
    MLP     & $0$ & $1.42\times10^2 $   & $7.22$ & $8.67 \times 10^4$ \\
    RNN     & $0$ & $6.05\times10^3$   & $9.60$ & $6.91\times10^5$ \\
    CNN     & $0$ & $5.93\times10^5$   & $7.93\times10^2$ & $9.87\times10^8 $ \\
    ResNet     &  $0$ & $9.43\times10^7$ & $4.53\times10^5$ & $6.44\times10^9$ \\
    Transformer     &  $0$ & $1.12\times10^{11}$ & $3.41\times10^6$ & $8.63\times10^{16}$ 
    \\
    \bottomrule
    \end{tabular}
  }
  \label{tb:abl}
\end{table*}

To empirically validate the results of our theorems, we show that all the conditions of Z-IL are needed to obtain exact backpropagation. Particularly, by starting from the same weight initialization, we have conducted one training step of the following five different learning algorithms: (i) BP, (ii)~Z-IL, (iii) Z-IL without level-dependent update, (iv) Z-IL with $\varepsilon^l_{i,0} \neq 0$, and (v) Z-IL with $\gamma = 0.5$.
%
%
Note that the last three algorithms are variations of Z-IL that are obtained by ablating each one of the initial conditions.

After conducting one training step of each algorithm, we have computed the Euclidean distance between the weights obtained by one of the algorithms (ii)--(v), and the ones obtained by BP. The results of these experiments, reported in Table~\ref{tb:abl}, show that all the three conditions of Z-IL are necessary in order to achieve zero divergence with BP.
To provide full evidence of the validation of our theoretical results, we have conducted this experiment using ANNs, CNNs, RNNs, ResNets, and Transformer networks. Further details about the experiments are given in the section below.

\section{Reproducibility of the Experiments} 
In this section, we provide the details of all the experiments shown in the experimental sections.

\paragraph{MLPs:} 
To perform our experiments with fully connected networks (multilayer perceptrons), we have trained three architectures with different depth on FashionMNIST. Particularly, these networks have a hidden dimension of $128$ neurons, and $2,3$, and $4$ layers, respectively. We have used a  batch of $20$ training points, and a learning rate of $0.01$. The numbers reported for the experiments are the averages over the three architectures. 

\paragraph{CNNs:} For our experiments on CNNs, we have used AlexNet trained on both FashionMNIST and ImageNet. As above, we have used a  batch of $20$ training points, a learning rate of $0.01$, and reported the average of the experiments over the two datasets. 

\paragraph{RNNs:} We have trained a reinforcement learning agent on a single-layer many-to-one RNN, with $n = n^{out} = 128$, on eight different Atari games. Batch size and learning rate are $32$ and $0.001$, respectively. Again, the reported results are the average of all the experiments performed on this architecture.

\paragraph{ResNets:} We have used a $5$-layers fully connected network with $256$ hidden neurons per layer. The residual connections are defined at every layer. Particularly, we have defined it in a way that allows its computational graph to be levelled. 

\paragraph{Transformer:} We have used a single-layer transformed architecture, trained on randomly generated data.

\paragraph{Hardware:} All experiments are conducted on 2 Nvidia GeForce GTX 1080Ti  GPUs and 8 Intel Core i7 CPUs, with 32 GB RAM. Furthermore, to avoid rounding errors, we have initialized the weights in \emph{float32}, and then transformed them in \emph{float64}, and all later computations are in \emph{float64}.

\section{Convolutional Networks}

\emph{Convolutional neural networks} (\emph{CNNs}) are biologically inspired networks with a connectivity pattern (given by a set of \emph{kernels}) that resembles the structure of animals' visual cortex. Networks with this particular architecture are widely used in image recognition tasks. A CNN is formed by a sequence of convolutional layers, followed by a sequence of fully connected ones. For simplicity of notation, we now consider convolutional layers with one kernel, then, we show how to extend our results to the general case. We now recall the structure of CNNs and compare it against convolutional~PCNs.

\subsection{CNNs Trained with BP}

The learnable parameters of a convolutional layer are contained in different kernels. Each kernel $\bar \rho^{\scriptscriptstyle {l}}$ can be seen as a vector of dimension $m$,
which acts on the input vector $ f(\bar y^{\scriptscriptstyle {l}})$ using an operation ``$*$'', called \emph{convolution}, which is equivalent to a linear transformation with a sparse matrix~$w^{\scriptscriptstyle {l}}$, whose non-zero entries equal to the entries of the kernel $\bar \rho^{\scriptscriptstyle {l}}$. This particular matrix is called doubly-block circulant matrix \cite{Sedghi20}. For every entry $\rho^{\scriptscriptstyle {l}}_a$ of a kernel, we denote by $\mathcal C^{\scriptscriptstyle {l}}_a$ the set of indices $(i,j)$ such that~$w^{\scriptscriptstyle {l}}_{i,j} = \rho^{\scriptscriptstyle {l}}_a$.

Let $f(\bar y^{\scriptscriptstyle {l+1}})$ be the input of a convolutional layer with kernel $\bar{\rho}^{\scriptscriptstyle {l}}$. The output $\bar y^{\scriptscriptstyle {l}}$ can then be computed as in the fully connected case: it suffices to use Eq.~\eqref{eq:delta-recursive}, where~$w^{\scriptscriptstyle {l}}$ is the doubly-block circulant matrix with parameters in $\overline \rho^{\scriptscriptstyle {l}}$. During the learning phase, BP updates the parameters of $\bar \rho^{\scriptscriptstyle {l+1}}$ according to the following equation:
\begin{equation}
\Delta \rho^{\scriptscriptstyle {l+1}}_{a} 
= -\alpha\cdot {\partial E}/{\partial \rho^{\scriptscriptstyle {l+1}}_{a}}
= {\textstyle\sum}_{(i,j) \in \mathcal C^{\scriptscriptstyle {l+1}}_a} \Delta w^{\scriptscriptstyle {l+1}}_{i,j}.
\label{eq:cnn_weight}
\end{equation}
The value $\Delta w^{l}_{i,j}$ can be computed using Eq.~\eqref{eq:deltaz}.

\subsection{Predictive Coding CNNs Trained with IL}

\begin{figure*}[t]
    \centering
	\includegraphics[width=0.4\textwidth]{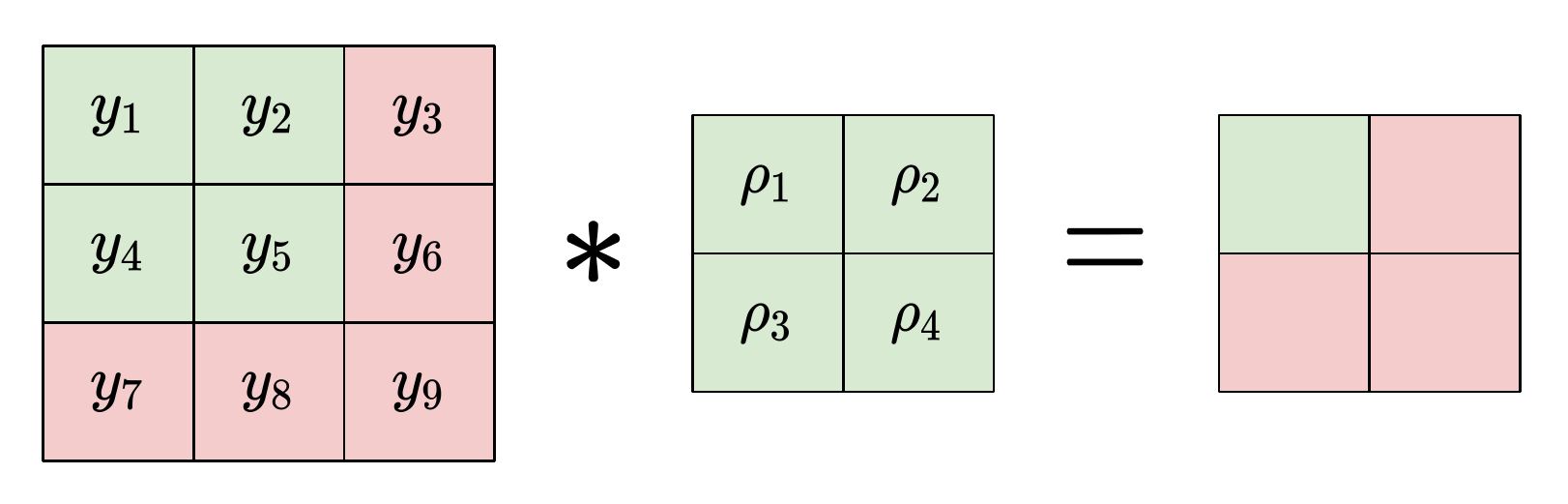}\vspace*{-1.5ex}
	\caption{An example of a \emph{convolution}.}\vspace*{-0.5ex}
	\label{fig:cnn}
\end{figure*}

Given a convolutional network, we call  $\overline{\lambda}^{\scriptscriptstyle {l}}$ the kernels of dimension $k$, and $\theta^{\scriptscriptstyle {l}}$ the related double-block circular matrix, which describes the convolution operation. Note that $\overline{\lambda}^{\scriptscriptstyle {l}}$ mirrors $\overline \rho^{\scriptscriptstyle {l}}$ defined in CNNs, while $\theta^l$ mirrors $w^l$. The dynamics of the feedforward pass is the same as the one described in the fully connected case. Hence,  the quantities $\varepsilon^{\scriptscriptstyle {l}}_{i,t}$ and $ \Delta x_{i,t}^{\scriptscriptstyle {l}}$ are computed as in Eqs.~\eqref{eq:mu-cg} and \eqref{eq:deltax-cg}. The update of the entries of the kernels, on the other hand, is the following:
\begin{equation}
    \Delta {\lambda}_a^{\scriptscriptstyle {l+1}} = -\alpha \cdot \frac{\partial F_t}{\partial {\lambda}_a^{\scriptscriptstyle {l+1}}} =  {\textstyle\sum}_{(i,j) \in \mathcal C^{\scriptscriptstyle {l+1}}_a} \Delta \theta_{i,j}^{\scriptscriptstyle {l+1}}, 
    \label{eq:cnn_il}
\end{equation}
where $\Delta \theta_{i,j}^{\scriptscriptstyle {l}}$ is computed according to Eq.~\eqref{eq:deltazeta}.

\subsection{Predictive Coding CNNs Trained with Z-IL}

Above, we have described the training and prediction phases on a single point $s = (\bar s^{\text{in}},\bar s^{\text{out}})$ under different architectures. The training phase of IL on a single point runs for $T$ iterations, during which the inference of Eq.~\eqref{eq:deltax-cg} is conducted, and $T$ is a hyperparameter that is usually set to different sufficiently large values  to get inference converged \cite{whittington2017approximation,millidge2020predictive}. So, the inference phase starts at $t\,{=}\,0$ and ends at $t\,{=}\,T$, which is also when the 
network 
parameters are updated  via 
Eqs.~\eqref{eq:deltaz} and~\eqref{eq:cnn_il}. 

To show that IL is able to do exact BP on both the fully connected and convolutional layers of a CNN, we add constraints on the weights update of IL. 

\smallskip 
\noindent\textbf{Z-IL:} Let $M$ be a PCN model with $l_{\text{max}}$ layers. The inference phase runs for $T = l_{\text{max}}$ iterations. Instead of updating all the weights simultaneously at $t=T$, the parameters of every layer $\theta^{l+1}$ are updated at $t=l$. Hence, the prediction phase of Z-IL is equivalent to the one of IL, while the learning phase updates the parameters according to the  equation
\begin{equation}
\!\Delta{\theta}^{\scriptscriptstyle {l+1,t}}_{i,j} 
= \begin{cases}
0  &  \!\!\mbox{if } t \neq l \\
\alpha\cdot \varepsilon^{\scriptscriptstyle {l}}_{i}
f ( x^{\scriptscriptstyle {l+1}}_{j,t} ) &  \!\!\mbox{if } t = l.
\end{cases}
\label{eq:pcn-dotx}
\end{equation}%
We now show that, under a specific choice of hyperparameters, Z-IL is equivalent to BP on CNNs. Particularly, we add the following two conditions: $\varepsilon^l_{i,0} \,{=}\, 0$ for $l\,{>}\,0$, and $\gamma  \,{=}\, 1$. 

 The first condition can be obtained by setting $\bar x^l_{0} \,{=}\, \bar \mu^l_{0}$ for every $l\,{>}\,0$ at the start of inference. Considering the vector $\bar \mu^l_0$ is computed from $\bar x^{l+1}_0$, this allows IL to start from a prediction stage  equivalent to the one of~BP. The second condition guarantees the propagation of the error during the inference phase to match the one of BP. Without it, 
 Z-IL would be equivalent to a variation of BP, where the weight updates of single layers would have different learning rates. 

The following theorem shows that Z-IL on convolutional PCNs is equivalent to BP on classical CNNs.

\begin{theorem}\label{thm:cnn}
    Let $M$ be a convolutional PCN trained with Z-IL with $\gamma \,{=}\, 1$ and $\varepsilon^l_{i,0} \,{=}\, 0$ for $l\,{>}\,0$, and let $M'$ be its corresponding CNN, initialized as $M$ and trained with BP.
   Then, given the same datapoint $s$ to both networks, we have 
   \begin{equation}
    \Delta \theta^{\scriptscriptstyle {l+1}}_{i,j} = \Delta w^{\scriptscriptstyle {l+1}}_{i,j}  \ \ \ \mbox{and} \ \ \ \Delta {\lambda}^{l+1}_i = \ \Delta \rho^{l+1}_i,
    \end{equation}
    for every $i,j,l \geq 0$.
\end{theorem}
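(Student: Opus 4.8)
The plan is to reduce the convolutional case to the fully connected case already settled in Theorem~\ref{thm:1}, and then to lift the resulting matrix-entry equalities to the kernel level by summation. The key structural fact is that convolution with a kernel $\bar\rho^{l}$ (resp.\ $\bar\lambda^{l}$) is by definition a linear map given by multiplication with the doubly-block circulant matrix $w^{l}$ (resp.\ $\theta^{l}$), whose nonzero entries are copies of the kernel entries indexed by the sets $\mathcal C^{l}_a$. First I would argue that, viewing $w^{l}$ and $\theta^{l}$ simply as structured weight matrices, the convolutional CNN and its PCN are literally a weight-tied MLP and a fully connected PCN: the forward recursion \eqref{eq:mu-cg}, the inference dynamics \eqref{eq:deltax-cg}, and the per-layer update schedule of Z-IL all depend on a layer only through the matrix acting on the activations, and that matrix is the same whether we describe it as a convolution or as multiplication by $w^{l}=\theta^{l}$. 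Hence the value nodes $x^{l}_{i,t}$, the prediction errors $\varepsilon^{l}_{i,t}$, and the predictions $\mu^{l}_{i,t}$ evolve identically to those of the associated fully connected networks.

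Given this identification, the next step is to apply Theorem~\ref{thm:1} to the per-entry weight updates. Because the hypotheses of the present theorem, namely $\gamma = 1$ and $\varepsilon^{l}_{i,0}=0$ for $l>0$, are exactly the conditions under which Z-IL reproduces BP on MLPs, Theorem~\ref{thm:1} yields $\Delta \theta^{l+1}_{i,j} = \Delta w^{l+1}_{i,j}$ for every position $(i,j)$ carrying a shared weight, and in particular for every $(i,j)\in\mathcal C^{l+1}_a$. Here the ``virtual'' per-position increments $\Delta w^{l+1}_{i,j}$ and $\Delta\theta^{l+1}_{i,j}$ are the quantities produced by \eqref{eq:deltaz} and by \eqref{eq:deltazeta}/\eqref{eq:pcn-dotx} before the weight-sharing constraint is imposed; this establishes the first of the two claimed equalities.

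Finally, I would lift this to the kernel updates. By definition, the BP kernel update \eqref{eq:cnn_weight} and the Z-IL kernel update \eqref{eq:cnn_il} are both sums of the corresponding per-position increments over the same index set $\mathcal C^{l+1}_a$. Substituting the entrywise equality obtained in the previous step gives
\[
\Delta\lambda^{l+1}_a = \!\!\sum_{(i,j)\in\mathcal C^{l+1}_a}\!\!\Delta\theta^{l+1}_{i,j} = \!\!\sum_{(i,j)\in\mathcal C^{l+1}_a}\!\!\Delta w^{l+1}_{i,j} = \Delta\rho^{l+1}_a,
\]
which is the second claimed equality; the induction over $l$ is driven by the same layer-by-layer error propagation used in the proof of Theorem~\ref{thm:1}.

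The main obstacle I anticipate is the first step: making the reduction fully rigorous requires checking that nothing in the Z-IL dynamics is sensitive to the sparsity and tying structure of $\theta^{l}$ beyond what the matrix-multiplication view already captures. Concretely, I would verify that inference only ever uses the forward matrix $\theta^{l}$ (whose entries coincide with those of $w^{l}$), that the zero-error initialization $x^{l}_{i,0}=\mu^{l}_{i,0}$ and the update time $t=l$ for the parameters of layer $l+1$ are identical in both descriptions, and that restricting attention to the active positions $\bigcup_a\mathcal C^{l+1}_a$ does not interfere with the computation of the errors $\varepsilon^{l}_{i,t}$ at deeper layers. Once this compatibility is confirmed, Theorem~\ref{thm:1} applies essentially verbatim and the remaining steps are purely algebraic.
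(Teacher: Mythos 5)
Your proposal is correct and takes essentially the same route as the paper's own proof: the paper's Claims 1 and 2 establish, by the same induction that underlies Theorem~\ref{thm:1} (explicitly noting that the argument ``does not change whether the layer considered is convolutional or fully connected''), precisely the per-entry equality $\Delta \theta^{\scriptscriptstyle{l+1}}_{i,j} = \Delta w^{\scriptscriptstyle{l+1}}_{i,j}$ that you import via the weight-tied-MLP identification, and the paper's Claim 3 is exactly your final summation over $\mathcal C^{\scriptscriptstyle{l+1}}_a$. The only difference is presentational: the paper re-derives the entrywise statement self-containedly through its two supporting lemmas on the Z-IL error dynamics rather than citing Theorem~\ref{thm:1} as a black box, which is the compatibility check you correctly flag as the one point needing verification.
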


\begin{proof}[Proof]
A convolutional network is formed by a sequence of convolutional layers followed by a sequence of fully connected ones. First, we prove the following:

\medskip 
\noindent\textbf{Claim 1:} At $t\,{=}\,l$, we have $\varepsilon^{\scriptscriptstyle {l}}_{i,t} = \delta^{\scriptscriptstyle {l}}_{i}$. 

\medskip 
\noindent This first partial result is proven by induction on the depth $l$ of the two networks, and does not change whether the layer considered is convolutional or fully connected. For PCNs, as $t=l$, it is also inducing on the inference moments. We begin by noting that, in Z-IL, $\varepsilon^{\scriptscriptstyle {l}}_{i,t} =  \varepsilon^{\scriptscriptstyle {l}}_{i,l}$.
\begin{itemize}
    \item \emph{Base Case, $l=0$}:

        The condition $\varepsilon^l_{i,0} \,{=}\, 0$ gives us $\mu^{\scriptscriptstyle {l}}_{i,0}=y^{\scriptscriptstyle {l}}_{i}$. Placing this result into Eq.~\eqref{eq:mu-cg}
        and Eq.~\eqref{eq:delta-recursive}, we get
        $\varepsilon^{\scriptscriptstyle {l}}_{i,l}=\delta^{\scriptscriptstyle {l}}_{i}$.
    \item \emph{Induction Step:}. For $l \in \lbrace 1, \ldots , l_{\text{max}}-1 \rbrace$, we have: 
        \begin{align*}
            & \varepsilon^{\scriptscriptstyle {l}}_{i,l}= f' ( \mu^{\scriptscriptstyle {l}}_{i,0} ) {\textstyle\sum}_{k=1}^{n^{\scriptscriptstyle {l-1}}} \varepsilon^{\scriptscriptstyle {l-1}}_{k,l-1} \theta^{\scriptscriptstyle {l}}_{k,i}   \text{ \ \ \ by Lemma~\ref{lem:pcn-varepsilon-iterative-app}} \\
            & \delta^{\scriptscriptstyle {l}}_{i} =f' ( y^{\scriptscriptstyle {l}}_{i} ) {\textstyle\sum}_{k=1}^{n^{\scriptscriptstyle {l-1}}} \delta^{\scriptscriptstyle {l-1}}_{k} w^{\scriptscriptstyle {l}}_{k,i}  \text{\ \ \ \ \ \ \ \ \ \ by Eq.~\eqref{eq:delta-recursive}.}
            \label{eq:error}
        \end{align*}
    Furthermore, note that $w_{i,j}^{\scriptscriptstyle {l}}=\theta_{i,j}^{\scriptscriptstyle {l}}$, because of the same initialization of the network, and $\mu^{\scriptscriptstyle {l}}_{i,0} = y^{\scriptscriptstyle {l}}_{i}$, because of $\varepsilon^l_{i,0} \,{=}\, 0$ for $l\,{>}\,0$. Plugging these two equalities into the error equations above gives \begin{equation}
        \varepsilon^{\scriptscriptstyle {l}}_{i,l}=\delta^{\scriptscriptstyle {l}}_{i}, \ \ \text{if} \ \ \varepsilon^{\scriptscriptstyle {l-1}}_{k,l-1}=\delta^{\scriptscriptstyle {l-1}}_{k}.
    \end{equation}
    This concludes the induction step and proves the claim.
\end{itemize}

We now have to show the equivalence of the weights updates. We  start our study from fully connected layers.

\medskip 
\noindent\textbf{Claim 2:} We have  $\Delta \theta^{\scriptscriptstyle {l+1}}_{i,j} = \Delta w^{\scriptscriptstyle {l+1}}_{i,j}$ for every $i,j,l \geq 0$. 

\medskip 
\noindent Eqs.~\eqref{eq:deltaz} and \eqref{eq:deltaz} state the following:

\begin{align*}
    & \Delta \theta^{\scriptscriptstyle {l+1}}_{i,j}
    = \alpha\cdot \varepsilon^{\scriptscriptstyle {l}}_{i,l} f ( x^{\scriptscriptstyle {l+1}}_{j,l} ), \\
    & \Delta w^{\scriptscriptstyle {l+1}}_{i,j} 
    =\alpha\cdot \delta^{\scriptscriptstyle {l}}_{i}
    f ( y^{\scriptscriptstyle {l+1}}_{j} ).
\end{align*}

Claim 1 gives $\varepsilon^{\scriptscriptstyle {l}}_{i,l}=\delta^{\scriptscriptstyle {l}}_{i}$.
We now have to show that ${\textstyle {\textstyle f ( x^{\scriptscriptstyle {l+1}}_{j,l} )=f ( y^{\scriptscriptstyle {l+1}}_{j} )}}$. 
The equivalence of the initial state between IL and BP gives ${x^{\scriptscriptstyle {l+1}}_{j,0} = \mu^{\scriptscriptstyle {l+1}}_{j,0} = \textstyle y^{\scriptscriptstyle {l+1}}_{j}}$.
Then, Lemma~\ref{lem:pcn-propagate-zero-app} shows that ${\textstyle x^{\scriptscriptstyle {l+1}}_{j,l}} = x^{\scriptscriptstyle {l+1}}_{j,0}$.
So,  ${\textstyle f ( x^{\scriptscriptstyle {l+1}}_{j,l} )=f ( y^{\scriptscriptstyle {l+1}}_{j} )}$.

\medskip 
\noindent\textbf{Claim 3:} We have  $\Delta \lambda^{\scriptscriptstyle {l+1}}_{a} = \Delta \rho^{\scriptscriptstyle {l+1}}_{a}$ for every $a,l \geq 0$. 

\medskip 
\noindent The law that regulates the updates of the kernels is given by the following equations:

\begin{align}\label{eq:cnn-proof}
    &\Delta {\lambda}_a^{l+1} = -\alpha \cdot \frac{\partial F_t}{\partial {\lambda}_a^{l}} =  {\textstyle\sum}_{(i,j) \in \mathcal C^{l+1}_a} \Delta \theta_{i,j}^{l} \\
    & \Delta \rho^{\scriptscriptstyle {l+1}}_{a} 
    = -\alpha\cdot {\partial E}/{\partial \rho^{\scriptscriptstyle {l+1}}_{a}}
    = {\textstyle\sum}_{(i,j) \in \mathcal C^{l+1}_a} \Delta w^{l+1}_{i,j}.
    \label{eq:cnn-proof-il}
\end{align}
These equations are equal if $\Delta \theta^{\scriptscriptstyle {l}}_{i,j} = \Delta w^{\scriptscriptstyle {l}}_{i,j}$ for every $i,j,l >0$, which is the result shown in Claim 2. Thus, the weight update at every iteration of Z-IL is equivalent to the one of BP for both convolutional and fully connected layers. 
\end{proof}

\begin{lemma}
    \label{lem:pcn-propagate-zero-app}
    Let $M$ be a convolutional PCN trained with Z-IL with $\gamma \,{=}\, 1$ and $\varepsilon^l_{i,0} \,{=}\, 0$ for $l\,{>}\,0$. Then, a variable $\bar x^l_{t}$ can only diverge from its corresponding initial state at time $t=l$. Formally,
    \begin{align*}
        & \overline{x}^{\scriptscriptstyle {l}}_{t<l}=\overline{x}^{\scriptscriptstyle {l}}_{0}, \overline{\varepsilon}^{\scriptscriptstyle {l}}_{t<l}=\overline{\varepsilon}^{\scriptscriptstyle {l}}_{0}=0, \overline{\mu}^{\scriptscriptstyle {l-1}}_{t<l}=\overline{\mu}^{\scriptscriptstyle {l-1}}_{0},  \text{ i.e.,} \\
        & \Delta{\overline{x}}^{\scriptscriptstyle {l}}_{t<l-1}=\overline{0}, \Delta{\overline{\varepsilon}}^{\scriptscriptstyle {l}}_{t<l-1}=\overline{0}, \Delta{\overline{\mu}}^{\scriptscriptstyle {l-1}}_{t<l-1}=\overline{0}
  \end{align*}
for  $l \in \lbrace 1, \ldots , l_{\text{max}}-1\rbrace$.
\end{lemma}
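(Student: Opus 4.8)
The plan is to prove Lemma~\ref{lem:pcn-propagate-zero-app} by induction on the inference time step $t$ rather than on the layer index $l$, packaging all three equalities into a single ``wavefront'' invariant. Concretely, I would establish for each $t$ the statement $Q(t)$: \emph{for every layer $l>t$ we have $\bar x^l_t=\bar x^l_0$ and $\bar\varepsilon^l_t=\bar 0$, and for every $m\ge t$ we have $\bar\mu^m_t=\bar\mu^m_0$.} The equalities claimed by the lemma are exactly the slices of this invariant: $\bar x^l_{t<l}=\bar x^l_0$ and $\bar\varepsilon^l_{t<l}=\bar 0$ are the case $l>t$, while $\bar\mu^{l-1}_{t<l}=\bar\mu^{l-1}_0$ is the case $m=l-1\ge t$. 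Choosing $t$ as the induction variable is the crucial decision: the three quantities are cyclically coupled within a single step (a value update needs the current errors, an error needs its current prediction, a prediction needs the deeper value nodes), and advancing time by one unit is precisely what unwinds this cycle one layer at a time.

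For the base case $t=0$, the value and prediction equalities are vacuous identities, and $\bar\varepsilon^l_0=\bar 0$ for every $l>0$ is exactly the standing hypothesis $\varepsilon^l_{i,0}=0$ for $l>0$; the only nonzero seed is the clamped output error at layer $0$, which is excluded since $l>t=0$ forces $l\ge 1$. For the inductive step $Q(t)\Rightarrow Q(t+1)$, I would process the three quantities in a fixed order. First, for any layer $l>t+1$ the inference rule~\eqref{eq:deltax-cg} reads $\Delta x^l_{i,t}=\gamma\,(\varepsilon^l_{i,t}-\sum_{j\in P(i)}\varepsilon^{l-1}_{j,t}\,\partial\mu^{l-1}_{j,t}/\partial x^l_{i,t})$; since $l>t$ and $l-1\ge t+1>t$, both $\varepsilon^l_{i,t}$ and every $\varepsilon^{l-1}_{j,t}$ vanish by $Q(t)$, so $\Delta\bar x^l_t=\bar 0$ and hence $\bar x^l_{t+1}=\bar x^l_0$. (Only $\gamma\neq 0$ is used here; the value $\gamma=1$ is irrelevant to the freezing and is needed only later for the magnitudes in Theorem~\ref{thm:cnn}.) Second, for $m\ge t+1$ the prediction is, by~\eqref{eq:mu-cg}, a function of the child values $\bar x^{m+1}$ alone, and $m+1>t+1$ gives $\bar x^{m+1}_{t+1}=\bar x^{m+1}_0$ by the step just proved, whence $\bar\mu^m_{t+1}=\bar\mu^m_0$. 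Third, for $l>t+1$ the error satisfies $\bar\varepsilon^l_{t+1}=\bar\mu^l_{t+1}-\bar x^l_{t+1}=\bar\mu^l_0-\bar x^l_0=\bar\varepsilon^l_0=\bar 0$, using $l>0$. This is exactly $Q(t+1)$.

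I would close by remarking that nothing in this argument distinguishes convolutional from fully connected layers: the convolution enters~\eqref{eq:deltax-cg} and~\eqref{eq:mu-cg} only through the structured linear map $\theta^l$, and the freezing uses solely the locality of the updates together with the vanishing of the seed errors, exactly as in the induction of Claim~1 in the proof of Theorem~\ref{thm:cnn}. The part I expect to be most delicate is the boundary bookkeeping rather than any computation: the wavefront layer $l=t+1$ is deliberately \emph{excluded} from $Q(t+1)$ (its error $\bar\varepsilon^t_t$ is intuitively the first nonzero signal that will drive $\bar x^{t+1}$ to move at step $t+1$, and the lemma only upper-bounds when divergence can begin, so no lower bound on it is required), and one must separately verify that the clamped output layer $l=0$ and the deepest layer do not disturb the indexing. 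Getting the strict-versus-non-strict inequalities on $l$, $l-1$, $m$, and $m+1$ exactly right throughout the step is the only genuinely error-prone piece.
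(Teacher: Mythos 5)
Your proof is correct and takes essentially the same route as the paper's: the paper argues informally that, since each $\bar x^{l}$ is driven only by errors in layers $l$ and $l-1$, all of which start at zero except the clamped output error, the perturbation needs $l$ steps to reach layer $l$, with $\bar\varepsilon^{l}_{t}$ and $\bar\mu^{l-1}_{t}$ then tracking $\bar x^{l}_{t}$ instantly via Eq.~\eqref{eq:mu-cg}; your invariant $Q(t)$ is precisely a rigorous induction-on-$t$ formalization of that wavefront argument, with the boundary cases ($l=1$ at $t=0$, the clamped output layer, the input layer) checked correctly. Your side remarks are also accurate: only $\gamma\neq 0$ is used for the freezing (with $\gamma=1$ needed later for Lemma~\ref{lem:pcn-varepsilon-iterative-app}), and convolutional structure plays no role beyond the locality of the linear map $\theta^{l}$.
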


\begin{proof}[Proof]
    Starting from the inference moment $t=0$, $\overline{x}^{\scriptscriptstyle {0}}_{0}$ is dragged away from $\overline{\mu}^{\scriptscriptstyle {0}}_{0}$ and fixed to $\overline{s}^{\text{out}}$, i.e., $\overline{\varepsilon}^{\scriptscriptstyle {0}}_{0}$ turns into nonzero from zero. 
    Since $\overline{x}$ in each layer is updated only on the basis of $\overline{\varepsilon}$ in the same and previous adjacent layer, as indicated by Eq. \eqref{eq:pcn-dotx}, also considering that $\varepsilon^l_{i,0} \,{=}\, 0$, for all layers but the output layer, it will take $l$ time steps to modify $\overline{x}^{\scriptscriptstyle {l}}_{t}$ at layer $l$ from the initial state. 
    Hence, $\overline{x}^{\scriptscriptstyle {l}}_{t}$ will remain in that initial state $\overline{x}^{\scriptscriptstyle {l}}_{0}$ for all $t<l$, i.e., $\overline{x}^{\scriptscriptstyle {l}}_{t<l}=\overline{x}^{\scriptscriptstyle {l}}_{0}$.
    Furthermore, any change in $\overline{x}^{\scriptscriptstyle {l}}_{t}$ causes a change in $\overline{\varepsilon}^{\scriptscriptstyle {l}}_{t}$ and $\overline{\mu}^{\scriptscriptstyle {l-1}}_{t}$ instantly via Eq. \eqref{eq:mu-cg} (otherwise $\overline{\varepsilon}^{\scriptscriptstyle {l}}_{t}$ and $\overline{\mu}^{\scriptscriptstyle {l-1}}_{t}$ remain in their corresponding initial states). 
    Thus, we know $\overline{\varepsilon}^{\scriptscriptstyle {l}}_{t<l}=\overline{\varepsilon}^{\scriptscriptstyle {l}}_{0}$ and $\overline{\mu}^{\scriptscriptstyle {l-1}}_{t<l}=\overline{\mu}^{\scriptscriptstyle {l-1}}_{0}$.
    Also, according to Eq.~\eqref{eq:pcn-dotx}, $\overline{\varepsilon}^{\scriptscriptstyle {l}}_{t<l}=\overline{\varepsilon}^{\scriptscriptstyle {l}}_{0}=0$.
    Equivalently, we have $\Delta{\overline{x}}^{\scriptscriptstyle {l}}_{t<l-1}=\overline{0}$, $\Delta{\overline{\varepsilon}}^{\scriptscriptstyle {l}}_{t<l-1}=\overline{0}$, and $\Delta{\overline{\mu}}^{\scriptscriptstyle {l-1}}_{t<l-1}=\overline{0}$.
\end{proof}

\begin{lemma}
    \label{lem:pcn-varepsilon-iterative-app}
    Let $M$ be a convolutional PCN trained with Z-IL with $\gamma \,{=}\, 1$ and $\varepsilon^l_{i,0} \,{=}\, 0$ for $l\,{>}\,0$. Then, the prediction error $\varepsilon^{\scriptscriptstyle {l}}_{i,t}$ at $t=l$ (i.e., $\varepsilon^{\scriptscriptstyle {l}}_{i,l}$)  can be derived from itself at previous inference moments in the previous layer.
    Formally:
     \begin{align}
        \varepsilon^{\scriptscriptstyle {l}}_{i,l}= f' ( \mu^{\scriptscriptstyle {l}}_{i,0} ) {\textstyle\sum}_{k=1}^{n^{\scriptscriptstyle {l-1}}} \varepsilon^{\scriptscriptstyle {l-1}}_{k,l-1} \theta^{\scriptscriptstyle {l}}_{k,i} ,  \label{eq:pcn-varepsilon-iterative-app}
    \end{align}
for $l \in \lbrace 1, \ldots , l_{\text{max}}-1 \rbrace\,.$
\end{lemma}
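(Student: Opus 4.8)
The plan is to obtain the recursion directly from the inference dynamics of Eq.~\eqref{eq:deltax-cg}, using Lemma~\ref{lem:pcn-propagate-zero-app} to freeze every quantity that cannot yet have moved before time $t=l$. The guiding observation is that, for an internal layer $l\in\{1,\dots,l_{\text{max}}-1\}$, the value node $x^{l}_{i,t}$ stays at its initial value until the single update computed at $t=l-1$; hence $\varepsilon^{l}_{i,l}$ is the product of exactly one inference step, which I can write down in closed form.

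First I would specialize the inference update to the convolutional/fully connected graph. There the parents of the node at layer $l$ are the units of layer $l-1$, with prediction $\mu^{l-1}_{k}=\sum_{i}\theta^{l}_{k,i}f(x^{l}_{i})$ (the index convention being fixed by the weight update in Eq.~\eqref{eq:pcn-dotx}), so that $\partial\mu^{l-1}_{k}/\partial x^{l}_{i}=\theta^{l}_{k,i}f'(x^{l}_{i})$ and Eq.~\eqref{eq:deltax-cg} becomes
\[
\Delta x^{l}_{i,t}=\gamma\Big(\varepsilon^{l}_{i,t}-f'(x^{l}_{i,t})\sum_{k=1}^{n^{l-1}}\varepsilon^{l-1}_{k,t}\,\theta^{l}_{k,i}\Big).
\]
By Lemma~\ref{lem:pcn-propagate-zero-app} we have $\varepsilon^{l}_{i,t}=0$ and $x^{l}_{i,t}=x^{l}_{i,0}$ for all $t<l$, so evaluating this update at $t=l-1$ with $\gamma=1$ annihilates the local term and leaves
\[
\Delta x^{l}_{i,l-1}=-f'(x^{l}_{i,0})\sum_{k=1}^{n^{l-1}}\varepsilon^{l-1}_{k,l-1}\,\theta^{l}_{k,i}.
\]
The zero-initial-error condition gives $x^{l}_{i,0}=\mu^{l}_{i,0}$, so $f'(x^{l}_{i,0})=f'(\mu^{l}_{i,0})$, which already produces the prefactor demanded by the statement.

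Next I would pin down the prediction $\mu^{l}_{i,l}$. Since $\mu^{l}_{i,t}$ is built from the children at layer $l+1$ via Eq.~\eqref{eq:mu-cg}, and Lemma~\ref{lem:pcn-propagate-zero-app} guarantees $x^{l+1}_{j,t}=x^{l+1}_{j,0}$ for every $t<l+1$ (in particular at $t=l$), the prediction has not yet changed: $\mu^{l}_{i,l}=\mu^{l}_{i,0}=x^{l}_{i,0}$. Combining this with $x^{l}_{i,l}=x^{l}_{i,0}+\Delta x^{l}_{i,l-1}$ in the defining identity $\varepsilon^{l}_{i,l}=\mu^{l}_{i,l}-x^{l}_{i,l}$, the two copies of $x^{l}_{i,0}$ cancel and I am left with $\varepsilon^{l}_{i,l}=-\Delta x^{l}_{i,l-1}=f'(\mu^{l}_{i,0})\sum_{k}\varepsilon^{l-1}_{k,l-1}\theta^{l}_{k,i}$, which is precisely Eq.~\eqref{eq:pcn-varepsilon-iterative-app}.

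The algebra is short; the only genuine obstacle is the bookkeeping of time indices, where three points must line up. One needs that (i) the self-term $\varepsilon^{l}_{i,l-1}$ is exactly zero, so the recursion is purely top-down; (ii) $x^{l}_{i}$ changes for the first time precisely at the step producing $x^{l}_{i,l}$, since an off-by-one in the update convention would destroy the cancellation of $x^{l}_{i,0}$; and (iii) $f'$ must be read off at the still-frozen value $x^{l}_{i,0}=\mu^{l}_{i,0}$. All three follow from Lemma~\ref{lem:pcn-propagate-zero-app} together with the zero-initial-error assumption, so once that lemma is available no hard step remains.
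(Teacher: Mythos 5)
Your proof is correct and takes essentially the same route as the paper's: both arguments isolate the single inference step $\Delta x^{l}_{i,l-1}$, invoke Lemma~\ref{lem:pcn-propagate-zero-app} to freeze $x^{l}_{i}$, $\mu^{l}_{i}$, and the self-error term before $t=l$, and then substitute $x^{l}_{i,0}=\mu^{l}_{i,0}$ with $\gamma=1$ to obtain the recursion. The only superficial difference is that you use the main-text convention $\varepsilon_{i,t}=\mu_{i,t}-x_{i,t}$ while the appendix proof uses $\varepsilon^{l}_{i,t}=x^{l}_{i,t}-\mu^{l}_{i,t}$; since Eq.~\eqref{eq:pcn-varepsilon-iterative-app} is invariant under a global sign flip of all errors, this changes nothing.
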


\begin{proof}[Proof]
    We first write a dynamic version of 
    $\varepsilon^{\scriptscriptstyle {l}}_{i,t} = x^{\scriptscriptstyle {l}}_{i,t} - \mu^{\scriptscriptstyle {l}}_{i,t}$:
    \begin{equation}
        \varepsilon^{\scriptscriptstyle {l}}_{i,t} = \varepsilon^{\scriptscriptstyle {l}}_{i,t-1} + {(\Delta{x}^{\scriptscriptstyle {l}}_{i,t-1} - \Delta{\mu}^{\scriptscriptstyle {l}}_{i,t-1})\,,}
    \end{equation}
    where $\Delta{\mu}^{\scriptscriptstyle {l}}_{i,t-1}=\mu^{\scriptscriptstyle {l}}_{i,t}-\mu^{\scriptscriptstyle {l}}_{i,t-1}$.
    Then, we expand $\varepsilon^{\scriptscriptstyle {l}}_{i,l}$ with the above equation and simplify it with Lemma~\ref{lem:pcn-propagate-zero-app}, i.e., $\varepsilon^{\scriptscriptstyle {l}}_{i,t<l}=0$ and $\Delta{\mu}^{\scriptscriptstyle {l-1}}_{i,t<l-1}=0$:
    \begin{align}
        \varepsilon^{\scriptscriptstyle {l}}_{i,l} 
        = \varepsilon^{\scriptscriptstyle {l}}_{i,l-1} + {(\Delta{x}^{\scriptscriptstyle {l}}_{i,l-1} - \Delta{\mu}^{\scriptscriptstyle {l}}_{i,l-1})}
        ={\Delta{x}^{\scriptscriptstyle {l}}_{i,l-1}},. \label{eq:varepsilon-dotx}
    \end{align}
    for $l \in \lbrace 1, \ldots , l_{\text{max}}-1 \rbrace$. We further investigate $\Delta{x}^{\scriptscriptstyle {l}}_{i,l-1}$ expanded with the inference dynamic Eq.~\eqref{eq:pcn-dotx} and simplify it with Lemma~\ref{lem:pcn-propagate-zero-app}, i.e., $\varepsilon^{\scriptscriptstyle {l}}_{i,t<l}=0$,
    \begin{align}
            \Delta{x}^{\scriptscriptstyle {l}}_{i,l-1}
            = & \gamma ( -\varepsilon^{\scriptscriptstyle {l}}_{i,l-1} + f' ( x^{\scriptscriptstyle {l}}_{i,l-1} ) ){\textstyle\sum}_{k=1}^{n^{\scriptscriptstyle {l-1}}} \varepsilon^{\scriptscriptstyle {l-1}}_{k,l-1} \theta^{\scriptscriptstyle {l}}_{k,i} \\
            = &\gamma f' ( x^{\scriptscriptstyle {l}}_{i,l-1} ){\textstyle\sum}_{k=1}^{n^{\scriptscriptstyle {l-1}}} \varepsilon^{\scriptscriptstyle {l-1}}_{k,l-1} \theta^{\scriptscriptstyle {l}}_{k,i}, 
        \label{eq:dotx-itertive}
    \end{align}
    for $l \in \lbrace 1, \ldots , l_{\text{max}}-1 \rbrace$. Putting Eq.~\eqref{eq:dotx-itertive} into Eq.~\eqref{eq:varepsilon-dotx}, we obtain:
    \begin{align}
        \varepsilon^{\scriptscriptstyle {l}}_{i,l} 
        =\gamma f' ( x^{\scriptscriptstyle {l}}_{i,l-1} ){\textstyle\sum}_{k=1}^{n^{\scriptscriptstyle {l-1}}} \varepsilon^{\scriptscriptstyle {l-1}}_{k,l-1} \theta^{\scriptscriptstyle {l}}_{k,i},
    \end{align}
    for $l \in \lbrace 1, \ldots , l_{\text{max}}-1 \rbrace$. With Lemma~\ref{lem:pcn-propagate-zero-app}, $x^{\scriptscriptstyle {l}}_{i,l-1}$ can be replaced with $x^{\scriptscriptstyle {l}}_{i,0}$.
    With $\varepsilon^l_{i,0} \,{=}\, 0$ for $l\,{>}\,0$, we can further replace $x^{\scriptscriptstyle {l}}_{i,0}$ with $\mu^{\scriptscriptstyle {l}}_{i,0}$.
    Thus, the above equation becomes:
    \begin{align}
        \varepsilon^{\scriptscriptstyle {l}}_{i,l} 
        ={\gamma} f' ( \mu^{\scriptscriptstyle {l}}_{i,0} ){\textstyle\sum}_{k=1}^{n^{\scriptscriptstyle {l-1}}} \varepsilon^{\scriptscriptstyle {l-1}}_{k,l-1} \theta^{\scriptscriptstyle {l}}_{k,i}, \label{eq:pcn-varepsilon-iterative-app-with-gamma}
    \end{align}
    for $l \in \lbrace 1, \ldots , l_{\text{max}}-1 \rbrace$. Then, put $\gamma=1$, into the above equation.
\end{proof}

\section{Extension to the Case of Multiple Kernels per Layer}

In the theorem proved in the previous section, 
we have only considered CNNs with one kernel per layer. While networks of this kind are theoretically interesting, in practice a convolutional layer is made of multiple kernels. We now show that the result of Theorem~\ref{thm:cnn} still holds if we consider networks of this kind. Let $M_l$ be the number of kernels present in layer $l$. In Theorem~\ref{thm:cnn}, we have considered the case $M_l = 1$ for every convolutional layer. Consider now the following three cases: 
\begin{itemize}
    \item \textbf{Case 1: $M_l > 1, M_{l-1} = 1$}. We have a network with a convolutional layer at position $l$ with $M_l$ different kernels $\{ \bar \rho^{l,1}, \dots, \bar \rho^{l,M_l} \}$ of the same size $k$. The result of the convolution between the input $f(\bar y^{l})$ and a single kernel $\bar\rho^{l,m}$ is called \emph{channel}. The final output $\bar y^{l-1}$ of a convolutional layer is obtained by concatenating all the channels into a single vector. 
    The operation generated by convolutions and concatenation just described, can be written as a linear map $w^l \cdot f(\bar y^{l})$, where the matrix $w^l$ is formed by $M_l$ doubly-block circulant matrices stocked vertically, each of which has entries equal to the ones of a kernel $\bar \rho^{l,m}$. For each entry $\rho^{\scriptscriptstyle {l,m}}_a$ of each kernel in layer $l$, we denote by $\mathcal C^{\scriptscriptstyle {l}}_{m,a}$ the set of indices $(i,j)$ such that $w^{\scriptscriptstyle {l}}_{i,j} = \rho^{\scriptscriptstyle {l,m}}_a$. The equation describing the changes of parameters in the kernels is then the following:
    \begin{equation}
        \Delta \rho^{\scriptscriptstyle {l,m}}_{a} 
        = -\alpha\cdot {\partial E}/{\partial \rho^{\scriptscriptstyle {l,m}}_{a}}
        = {\textstyle\sum}_{(i,j) \in \mathcal C^{\scriptscriptstyle {l}}_{m,a}} \Delta w^{\scriptscriptstyle {l}}_{i,j}.
    \end{equation}
    
    \item \textbf{Case 2: $M_l = 1,M_{l-1} > 1$}. We now analyze what happens in a layer with only one kernel, when the input $f(\bar y^{l-1})$ comes from a layer with multiple kernels. This case differs from Case 1, because the input represents a concatenation of $M_{l-1}$ different channels. In fact, the kernel $\bar \rho^{l}$ gets convoluted with every channel independently. The resulting vectors of these convolutions are then summed together, obtaining $\bar y^{l}$. The operation generated by convolutions and summations just described, can be written as a linear map $w^l \cdot f(\bar y^{l})$. In this case, the matrix $w^l$ is formed by $M_{l-1}$ doubly-block circulant matrices stocked horizontally, each of which has entries equal to the ones of the kernel $\bar \rho^{l}$. For every entry $\rho^{\scriptscriptstyle {l}}_a$, we denote by $\mathcal C^{\scriptscriptstyle {l}}_{a}$ the set of indices $(i,j)$ such that $w^{\scriptscriptstyle {l}}_{i,j} = \rho^{\scriptscriptstyle {l}}_a$. The equation that describes the changes of parameters in the kernels is then the following:
    \begin{equation}
        \Delta \rho^{\scriptscriptstyle {l}}_{a} 
        = -\alpha\cdot {\partial E}/{\partial \rho^{\scriptscriptstyle {l}}_{a}}
        = {\textstyle\sum}_{(i,j) \in \mathcal C^{\scriptscriptstyle {l}}_{a}} \Delta w^{\scriptscriptstyle {l}}_{i,j}.
    \end{equation}
        
    \item \textbf{Case 3 (General Case): $M_l,M_{l-1} > 1$}. We now move to the most general case: a convolutional layer at position $l$ with $M_l$ different kernels $\{ \bar \rho^{l,1}, \dots, \bar \rho^{l,M_l} \}$, whose input $f(\bar y^l)$ is a vector formed by $M_{l-1}$ channels. In this case, every kernel does a convolution with every channel. The output $\bar y^{l+1}$ is obtained as follows: the results obtained using the same kernel on different channels are summed together, and concatenated with the results obtained using the other kernels. Again, this operation can be written as a linear map $w^l \cdot f(\bar y^{l})$. By merging the results obtained from Case 1 and Case 2, we have that the matrix $w^l$ is a grid of $M_l \times M_{l+1}$ doubly-block circulant submatrices. For every entry $\rho^{\scriptscriptstyle {l,m}}_a$ of every kernel in layer $l$, we denote by $\mathcal C^{\scriptscriptstyle {l}}_{m,a}$ the set of indices $(i,j)$ such that $w^{\scriptscriptstyle {l}}_{i,j} = \rho^{\scriptscriptstyle {l,m}}_a$. The equation  describing the changes of parameters in the kernels is then the following:
    \begin{equation}\label{eq:gen-cnn}
        \Delta \rho^{\scriptscriptstyle {l,m}}_{a} 
        = -\alpha\cdot {\partial E}/{\partial \rho^{\scriptscriptstyle {l,m}}_{a}}
        = {\textstyle\sum}_{(i,j) \in \mathcal C^{\scriptscriptstyle {l}}_{m,a}} \Delta w^{\scriptscriptstyle {l}}_{i,j}.
    \end{equation}
    
\end{itemize}

To integrate this general case in the proof of Theorem~\ref{thm:cnn}, it suffices to consider Eq.~\eqref{eq:gen-cnn}, and its equivalent formulation in the language of a convolutional PCN,
\begin{equation}
\Delta \rho^{\scriptscriptstyle {l+1}}_{a} 
    = -\alpha\cdot {\partial E}/{\partial \rho^{\scriptscriptstyle {l+1}}_{a}}
    = {\textstyle\sum}_{(i,j) \in \mathcal C^{l+1}_a} \Delta w^{l+1}_{i,j}
\end{equation}
 instead of Eqs.~\eqref{eq:cnn-proof} and \eqref{eq:cnn-proof-il}. Note that both equations are fully determined once we have computed $\Delta w^{\scriptscriptstyle {l}}_{i,j}$ and $\Delta \theta^{\scriptscriptstyle {l}}_{i,j}$ for every $i,j>0$. Hence, the result follows directly by doing the same computations.

\section{Recurrent Neural Networks (RNNs)}

While CNNs achieve impressive results in computer vision tasks, their performance drops when handling data with sequential structure, such as natural language sentences. An example is sentiment analysis: given a sentence $S^{\text{in}}$ with words $(\overline s^{\text{in}}_1, \dots, \overline s^{\text{in}}_N)$, predict whether this sentence is positive, negative, or neutral. To perform classification and regression tasks on this kind of data, the last decades have seen the raise of recurrent neural networks (RNNs). Networks that deal with a sequential input and a non-sequential output are called \emph{many-to-one} RNNs. An example of such an architecture is shown in Fig.~\ref{fig:rnn}.
In this section, we show that the proposed Z-IL, along with our conclusions, can be extended to RNNs as well.
We first recall RNNs trained with BP, and then show how to define a recurrent PCN trained with IL. We conclude  by showing that the proposed Z-IL can also be carried over and scaled to RNNs, and that our equivalence conclusions still hold.

\subsection{RNNs Trained with BP}

An RNN for classification and regression tasks has three different weight matrices $w^x, w^h$, and $ w^y$, $N$ hidden layers of dimension $n$, and an output layer {of dimension} $n^{\text{out}}$. When it does not lead to confusion, we will alternate the notation between $k=\text{out}$ and $k=N+1$. This guarantees a lighter notation in the formulas. A sequential input $S^{\text{in}} = \{ \overline s^{\text{in}}_{1}, \dots , \bar s^{\text{in}}_{N} \}$ is a sequence of $N$ vectors of dimension $n^{\text{in}}$. The first hidden layer is computed using the first vector of the sequential input, while  the output layer is computed by multiplying the last hidden layer by the matrix $w^y$, i.e.,  $\bar y^{\text{out}} = w^y \cdot f(\bar y^N) $. The structure of the RNN with the used notation is summarized in Fig.~\ref{fig:rnn}. By assuming $\overline y^0 = \overline 0$, the local computations of the network can be written~as follows:
\begin{equation}
    \begin{split}
    & y^k_i = {\textstyle\sum}_{j=1}^{n} w^h_{i,j} f ( y^{\scriptscriptstyle {k-1}}_{j} ) + {\textstyle\sum}_{j=1}^{n^{\text{in}}} w^x_{i,j} s^{\text{in}}_{k,j}, \\
    & y^{\text{out}}_i = {\textstyle\sum}_{j=1}^{n} w^y_{i,j} f ( y^{\scriptscriptstyle N}_{j} ).
    \end{split}
    \label{eq:rnn-forward}
\end{equation}

\smallskip 
\noindent\textbf{Prediction: }%
Given a sequential value
$S^{\text{in}}$ as input, every $y^{\scriptscriptstyle {k}}_{i}$ in the RNN is computed via Eq.~\eqref{eq:rnn-forward}.

\smallskip 
\noindent\textbf{Learning: }%
Given a sequential value
$S^{\text{in}}$ as input, the output $\bar y^{\text{out}}$ is then compared with the label $\overline s^{\text{out}}$ using MSE Loss.
We now show how BP updates the weights of the three weight matrices. Note that $w^y$ is a fully connected layer that connects the last hidden layer to the output layer. We have already computed this specific weight update in Eq.~\eqref{eq:deltaz}:
\begin{equation}
    \Delta w^y_{i,j} = \alpha \cdot \delta^{\text{out}}_i f(y^N_j) \mbox{ \ \ with \ \ } \delta^{\text{out}}_i =  s^{\text{out}}_{i} - y^{\text{out}}_{i}.
\end{equation}
The gradients of $E$ relative to the single entries of 
$w^x$ and $w^y$ are the sum of the gradients at each recurrent layer $k$. Thus,
\begin{equation}
    \begin{split}
        & \Delta w^x_{i,j} = \alpha \cdot {\textstyle\sum}_{k=1}^N \delta^{k}_i s^{\text{in}}_{k,j} \\
        & \Delta w^h_{i,j} = \alpha \cdot {\textstyle\sum}_{k=1}^N \delta^{k}_i f(y^{k-1}_j).
    \end{split}
    \label{eq:weight_rnn_bp}
\end{equation}
The error term $\delta^{\scriptscriptstyle {k}}_{i} \,{ =}\, {\partial E}/{\partial {y}^{\scriptscriptstyle {k}}_{i}}$ is defined as in~Eq.~\eqref{eq:delta-recursive}:

\begin{equation}
    \delta^{\scriptscriptstyle {k}}_{i} = f' ( y^{\scriptscriptstyle {k}}_{i} ) {\textstyle\sum}_{j=1}^{n} \delta^{\scriptscriptstyle {k+1}}_{j} w^{\scriptscriptstyle {h}}_{j,i}.
    \label{eq:error-rnn}
\end{equation}
\subsection{Predictive Coding RNNs Trained with IL}

\begin{figure*}
    \centering
	\includegraphics[width=0.3\textwidth]{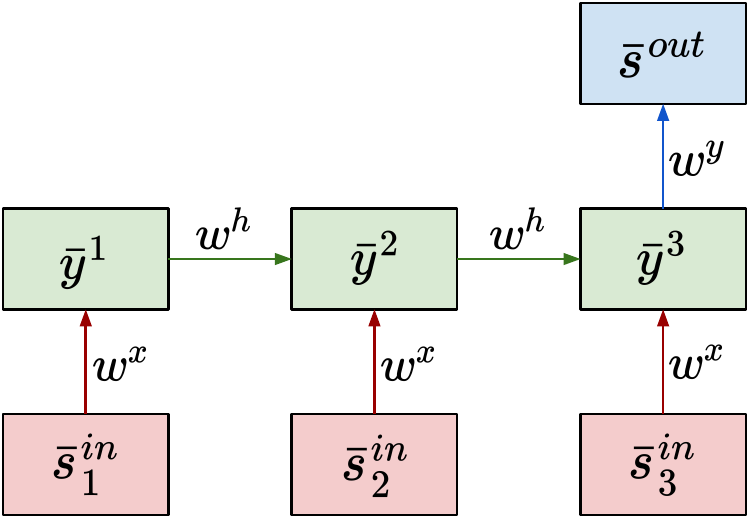}\vspace*{-1ex}
	\caption{An example of a \emph{many-to-one} RNN.}\vspace*{-1ex}
	\label{fig:rnn}
\end{figure*}

We show how to define a recurrent PCN trained with IL.
Recurrent PCNs have the same layer structure as the network introduced in the previous section. Hence, by assuming $\overline x^0 = \overline 0$, the forward pass is given by as follows: 
\begin{equation}
    \begin{split}
    & \mu^k_{i,t} = {\textstyle\sum}_{j=1}^{n} \theta^h_{i,j} f ( x^{\scriptscriptstyle {k-1}}_{j,t} ) + {\textstyle\sum}_{j=1}^{n} \theta^x_{i,j} s^{\text{in}}_{k,j}, \\
    & \mu^{\text{out}}_{i,t} = {\textstyle\sum}_{j=1}^{n} \theta^y_{i,j} f ( x^{\scriptscriptstyle N}_{j,t} ).
    \end{split}
    \label{eq:rnn-forward-il}
\end{equation}
Here, $\theta^x$, $\theta^h$, and $\theta^y$ are the weight matrices paralleling $w^x$, $w^h$, and $w^y$, respectively. The $\mu^{\scriptscriptstyle {k}}_i$ and $x^{\scriptscriptstyle {k}}_i$ are defined as in the preliminaries. Again, error nodes computes the error between them $\varepsilon^{\scriptscriptstyle {k}}_{i,t} = x^{\scriptscriptstyle {k}}_{i,t} - \mu^{\scriptscriptstyle {k}}_{i,t}$. During the inference phase, the value nodes $x^{\scriptscriptstyle {k}}_{i,t}$ are updated to minimize the energy function $F_t$. During the learning phase, this update is done via:
\begin{equation}
\!\Delta{x}^{\scriptscriptstyle {k}}_{i,t} = \begin{cases}
\gamma\cdot ( -\varepsilon^{\scriptscriptstyle {k}}_{i,t} + f' ( x^{\scriptscriptstyle {k}}_{i,t} ) {\textstyle\sum}_{j=1}^{n} \varepsilon^{\scriptscriptstyle {k-1}}_{j,t} \theta^{\scriptscriptstyle {h}}_{j,i} ) & \!\!\mbox{if } k \geq 1\\
0 & \!\!\mbox{if } k = \text{out}.
\end{cases}
\label{eq:rnn-pcn-dotx}
\end{equation}%

\smallskip 
\noindent\textbf{Prediction: }%
Given a sequential value
$S^{\text{in}}$ as input, every $\mu^{\scriptscriptstyle {k}}_{i}$ in the RNN is computed as the prediction via Eq.~\eqref{eq:rnn-forward-il}. Again, all error nodes converge to zero when $t\rightarrow \infty$, thus, $x^{\scriptscriptstyle {k}}_{i} = \mu^{\scriptscriptstyle {k}}_{i}$. 

\smallskip 
\noindent\textbf{Learning: }%
Given a sequential value
$S^{\text{in}}$ as input, the error in the output layer is set to  $\varepsilon^{\text{out}}_{i,0} = s^{\text{out}}_{i} - \mu^{\text{out}}_{i,0}$. From here, the inference phase spreads the error among all the neurons of the network. Once this process has converged to an equilibrium, the parameters of the network get updated in order to minimize the total energy function. This causes the following weight updates:
\begin{equation}
    \begin{split}
        & \Delta \theta^x_{i,j} = \alpha \cdot {\textstyle\sum}_{k=1}^N \varepsilon^{k}_{i,t} s^{\text{in}}_{k,j} \\
        & \Delta \theta^h_{i,j} = \alpha \cdot {\textstyle\sum}_{k=1}^N \varepsilon^{k}_{i,t} f(x^{k-1}_j) \\
        & \Delta \theta^y_{i,j} = \alpha \cdot \varepsilon^{\text{out}}_{i,t} f(x^N_j).
    \end{split}
    \label{eq:weight_rnn_il}
\end{equation}

\subsection{Predictive Coding RNNs Trained with Z-IL}

We now show that Z-IL can also be carried over and scaled to RNNs, and that the equivalence of Theorem~\ref{thm:1} also holds for the considered RNNs. This equivalence can be extended to deeper networks, as it suffices to stack multiple layers (fully connected or convolutional) on top of the RNN's output layer.

\begin{theorem}
    \label{ther:final-equal-rnn-supp} 
    Let $M$ be a recurrent PCN trained with Z-IL with $\gamma \,{=}\, 1$ and $\varepsilon^k_{i,0} \,{=}\, 0$ for $k \,{>}\,0$, and let $M'$ be its corresponding RNN, initialized as $M$ and trained with BP.
   Then, given the same sequential input $S \,{=}\, \{\bar s_1, \dots, \bar s_N\}$ to both, 
\begin{equation}
    \begin{split}
        & \Delta \theta^x_{i,j} = \Delta w^x_{i,j} \\
        & \Delta \theta^h_{i,j} = \Delta w^h_{i,j} \\
        & \Delta \theta^y_{i,j} = \Delta w^y_{i,j},
    \end{split}
    \label{eq:weight_rnn_theo}
\end{equation}
    for every $i,j>0$.
\end{theorem}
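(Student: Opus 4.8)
The plan is to mirror the strategy used for Theorem~\ref{thm:cnn}, adapting it to the recurrent structure and, above all, to the fact that $w^h$ and $w^x$ are shared across all recurrent steps. The skeleton has three stages: (i) show that in Z-IL the error signal propagates exactly one layer per inference step, so that each hidden layer $k$ receives its ``final'' error at a single, well-defined time step $t_k$; (ii) show that at this step the prediction error coincides with the BP error term, $\varepsilon^k_{i,t_k}=\delta^k_i$; and (iii) show that summing the Z-IL contributions, each read off at its own $t_k$, reconstructs the BP weight updates of Eq.~\eqref{eq:weight_rnn_bp}. Throughout I use that $\theta^x=w^x$, $\theta^h=w^h$, $\theta^y=w^y$ by the shared initialization, and that $\mu^k_{i,0}=y^k_i$ by the condition $\varepsilon^k_{i,0}=0$ for $k>0$.

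First I would prove the recurrent analogue of Lemma~\ref{lem:pcn-propagate-zero-app}. Since $\varepsilon^k_{i,0}=0$ for every $k>0$ and only the output error is nonzero at $t=0$, and since by Eq.~\eqref{eq:rnn-pcn-dotx} a value node is driven only through the errors of its own and the adjacent layer, the perturbation emanating from the output reaches hidden layer $k$ only after $t_k$ steps, where $t_k=N+1-k$ is the distance of layer $k$ from the output in the unrolled graph. Hence $x^k_{i,t}=x^k_{i,0}$, $\varepsilon^k_{i,t}=0$, and the predictions feeding layer $k$ stay at their forward-pass values for all $t<t_k$. This pins down the single inference step at which each layer is active, and simultaneously guarantees $f(x^{k-1}_{j,t_k})=f(y^{k-1}_j)$, because the value nodes feeding layer $k$ have not yet drifted when layer $k$ fires.

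Next I would establish $\varepsilon^k_{i,t_k}=\delta^k_i$ by induction running from $k=N$ down to $1$ (equivalently, forward in inference time). The base case is the output layer: $\varepsilon^{\text{out}}_{i,0}=s^{\text{out}}_i-\mu^{\text{out}}_{i,0}=s^{\text{out}}_i-y^{\text{out}}_i=\delta^{\text{out}}_i$, and the transition to layer $N$ goes through $w^y$ exactly as in the fully connected case. For the induction step I would derive, as in Lemma~\ref{lem:pcn-varepsilon-iterative-app}, a dynamic expression for $\varepsilon^k_{i,t_k}$ in terms of $\Delta x^k_{i,t_k-1}$, expand it with Eq.~\eqref{eq:rnn-pcn-dotx}, and simplify using the propagation lemma together with $\gamma=1$ to obtain
\begin{equation}
\varepsilon^k_{i,t_k} = f'(\mu^k_{i,0})\sum_{j} \varepsilon^{k+1}_{j,\,t_k-1}\,\theta^h_{j,i}.
\end{equation}
Since $t_k-1=t_{k+1}$, $\theta^h=w^h$, and $\mu^k_{i,0}=y^k_i$, this recursion is identical to the BP recursion of Eq.~\eqref{eq:error-rnn}, so the inductive hypothesis $\varepsilon^{k+1}_{j,t_{k+1}}=\delta^{k+1}_j$ yields $\varepsilon^k_{i,t_k}=\delta^k_i$.

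Finally I would assemble the three weight updates. The update of $\theta^y$ is immediate from the base case and matches $\Delta w^y_{i,j}$ just as in the fully connected output layer. For the shared matrices, the Z-IL rule accumulates, over the recurrent layers $k=1,\dots,N$, the contributions $\varepsilon^k_{i,t_k}f(x^{k-1}_{j,t_k})$ for $\theta^h$ and $\varepsilon^k_{i,t_k}s^{\text{in}}_{k,j}$ for $\theta^x$; substituting $\varepsilon^k_{i,t_k}=\delta^k_i$ and $f(x^{k-1}_{j,t_k})=f(y^{k-1}_j)$ from the two previous stages collapses these sums into exactly Eq.~\eqref{eq:weight_rnn_bp}. \textbf{The main obstacle} is precisely this last piece of bookkeeping across time: unlike the MLP and CNN cases, where each leaf weight is touched at a single step, the recurrent weights receive gradient contributions from \emph{every} layer, and in Z-IL these arrive at different inference steps $t_k$. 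The argument must therefore show that the temporally staggered Z-IL contributions sum to the single, time-collapsed BP gradient, which is exactly where the distance structure of the unrolled graph and the one-layer-per-step propagation lemma carry the weight of the proof.
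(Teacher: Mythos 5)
Your proposal is correct, but it takes a genuinely different route from the paper's own proof. The paper treats the many-to-one recurrent PCN as a depth-$2$ network and sets $T=2$: it proves the single claim that $\varepsilon^k_{i,1}=\delta^k_i$ holds for \emph{every} recurrent layer $k$ at the \emph{same} inference time $t=1$, arguing by induction on the sequence length $N$ (the base case $N=1$ reduces to Theorem~\ref{thm:cnn}), with Lemma~\ref{lem:pcn-varepsilon-iterative-app-rnn} running the error recursion across recurrent layers at the fixed time $t=1$; all contributions to the shared matrices $\theta^h,\theta^x$ in Eq.~\eqref{eq:weight_rnn_il} are then read off simultaneously at that one step, so each weight matrix is updated exactly once, as in the original Z-IL protocol. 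You instead transplant the machinery of the paper's main theorem on levelled computational graphs (the analogues of Lemmas~\ref{lem:1} and~\ref{lem:2}): error arrives at layer $k$ only at its distance-determined step $t_k=N+1-k$, you prove $\varepsilon^k_{i,t_k}=\delta^k_i$ by induction down the unrolled chain using $t_k-1=t_{k+1}$, and you let the shared weights \emph{accumulate} their contributions at the $N$ distinct steps at which each level's error becomes exact, with the freezing lemma guaranteeing $f(x^{k-1}_{j,t_k})=f(y^{k-1}_j)$. What each buys: your version is the one consistent with the strictly local, one-layer-per-inference-step dynamics of Eq.~\eqref{eq:rnn-pcn-dotx} used everywhere else in the paper (under simultaneous updates, $\varepsilon^{N-1}_{i,1}=0$ for $N\geq 2$, since $\varepsilon^N_{i,0}=0$, so the paper's fixed-time claim only makes sense if each inference step sweeps the chain sequentially, which is what its remark about iterating ``over the previous recurrent layer $k$ at fixed time $t=1$'' tacitly assumes), and it makes the weight-sharing bookkeeping — the genuinely new difficulty relative to the MLP/CNN case, as you correctly identify — explicit and airtight; the paper's version buys a number of inference steps that is constant in $N$ and a single update per weight matrix, at the cost of compressing the temporal structure into one step and leaving the propagation timing implicit. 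Both routes rest on the same base facts ($\theta=w$ by shared initialization, $\mu^k_{i,0}=y^k_i$ from $\varepsilon^k_{i,0}=0$, values frozen until the error arrives), and both reduce the $\theta^y$ case to the fully connected layer of Theorem~\ref{thm:cnn}.
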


\begin{proof}

The network $M$ has depth $2$; hence, we set $T=2$. We now prove the following three equivalences: $(1)$ $\Delta \theta^y = \Delta w^y$, $(2)$ $\Delta \theta^h = \Delta w^h$, and $(3)$ $\Delta \theta^x = \Delta w^x$.

The proof of $(1)$ is straightforward, since both the output layers $\theta^y$ and $w^y$ are fully connected. Particularly, we have already shown the equivalence for this kind of layers in Theorem~\ref{thm:cnn}. Before proving $(2)$ and $(3)$, we show an intermediate result needed in both cases.

\medskip 
\noindent \textbf{Claim:} Given a sequential input $S^{in}$ of length $N$, at $t=1$ we have $\varepsilon^k_{i,1} = \delta^k_i$ for every $k \leq N$. 

\medskip 
\noindent This part of the proof is done by induction on $N$. 
\begin{itemize}
    \item Base Case: $N = 1$. Given a sequential input of length $1$, we have a fully connected network of depth $2$ with $w^1 = w^y$ (resp. $\theta^1 = \theta^y$) and $w^2 = w^x$ (resp. $\theta^2 = \theta^x$). We have already proved this result in Theorem~\ref{thm:cnn}.
    
    \item Induction Step. Let us assume that, given a sequential input $S^{in}$ of length $N$, the claim $\varepsilon^k_{i,1} = \delta^k_i$ holds for every $k \in \{1, \dots, N\}$. Let us now assume we have a sequential input of length $N+1$. Note that the errors  $\varepsilon^k_{i,1}$ and  $\delta^k_i$ are computed backwards starting from $k=N+1$. Hence, the quantities $\varepsilon^k_{i,1}$ and  $\delta^k_i$ for $k \in \{2, \dots, N+1\}$ are computed as they were the errors of a sequential input of length $N$. It follows by the induction argument that $\varepsilon^k_{i,1} = \delta^k_i$ for every $k \in \{2, \dots, N+1\}$. To conclude the proof, we have to show that $\varepsilon^1_{i,1} = \delta^1_i$.  For $k=1$, we have:
            \begin{align*}
            & \varepsilon^{\scriptscriptstyle {1}}_{i,l}= f' ( \mu^{\scriptscriptstyle {1}}_{i,0} ) {\textstyle\sum}_{j=1}^{n} \varepsilon^{\scriptscriptstyle {2}}_{j,t} \theta^{\scriptscriptstyle {h}}_{j,i}   \text{ \ \ \ \ \ \ \ by Lemma~\ref{lem:pcn-varepsilon-iterative-app-rnn}} \\
            & \delta^{\scriptscriptstyle {1}}_{i} = f' ( y^{\scriptscriptstyle {1}}_{i} ) {\textstyle\sum}_{j=1}^{n} \delta^{\scriptscriptstyle {2}}_{j} w^{\scriptscriptstyle {h}}_{j,i}.  \text{\ \ \ \ \ \ \ \ \ \ \  by Eq.~\eqref{eq:error-rnn}.}
            \label{eq:error-rnn-proof}
        \end{align*}
    
    Note that $w^h_{i,j}=\theta^h_{i,j}$, because of the same initialization of the network. Furthermore, $\mu^{\scriptscriptstyle {k}}_{i,0} = y^{\scriptscriptstyle {k}}_{i}$ for every $k$ because of $\varepsilon^k_{i,0} \,{=}\, 0$. Plugging these two equalities into the error equations above gives $\varepsilon^1_{i,1} = \delta^1_i$.
    This concludes the induction step and proves the claim.
\end{itemize}

\smallskip\noindent\emph{(2)  $\Delta \theta^h = \Delta w^h$.} Recall that Eqs.~\eqref{eq:weight_rnn_il} and \eqref{eq:weight_rnn_bp} state that
\begin{align*}
    & \Delta \theta^h_{i,j} = \alpha \cdot {\textstyle\sum}_{k=1}^N \varepsilon^{k}_{i,t} f(x^{k-1}_{j,1}) \\
    & \Delta w^h_{i,j} = \alpha \cdot {\textstyle\sum}_{k=1}^N \delta^{k}_i f(y^{k-1}_j).
\end{align*}
The claim shown above gives $\varepsilon^{k}_{i,1}  \,{=}\, \delta^{k}_i$.  We thus have to show that $x^{k}_{j,1}  \,{=}\,y^{k}_j$. The condition $\varepsilon^k_{j,0}  \,{=}\, 0$ gives $x^{k}_{j,0} \,{=}\, \mu^{k}_{j,0} \,{=}\, y^{k}_{j}$. Moreover, by Lemma~\ref{lem:pcn-varepsilon-iterative-app}, $x^{k}_{j,1}  \,{=}\, x^{k}_{j,0}$. So, $x^{k}_{j,1} \,{=}\, y^{k}_j$. 

\smallskip\noindent\emph{(3)  $\Delta \theta^x = \Delta w^x$.} Recall that Eqs.~\eqref{eq:weight_rnn_il} and \eqref{eq:weight_rnn_bp} state that
\begin{align*}
    & \Delta \theta^x_{i,j} = \alpha \cdot {\textstyle\sum}_{k=1}^N \varepsilon^{k}_{i,t} s^{\text{in}}_{k,j} \\
    & \Delta w^x_{i,j} = \alpha \cdot {\textstyle\sum}_{k=1}^N \delta^{k}_i s^{\text{in}}_{k,j}.
\end{align*}
The equality $\Delta \theta^x = \Delta w^x$  directly follows from $\varepsilon^{k}_{i,1} = \delta^{k}_i$.
\end{proof}

\begin{lemma}
    \label{lem:pcn-varepsilon-iterative-app-rnn}
    Let $M$ be a recurrent PCN trained with Z-IL on a sequential input $S^{in}$ of length $N$. Furthermore, let us assume that $\gamma \,{=}\, 1$ and $\varepsilon^k_{i,0} \,{=}\, 0$ for every $k \in \{1, \dots, N\}$. Then, the prediction error $\varepsilon^{\scriptscriptstyle {k}}_{i,t}$ at $t=1$ (i.e., $\varepsilon^{\scriptscriptstyle {k}}_{i,1}$)  can be derived from the previous recurrent layer.
    Formally:
     \begin{align}
        \varepsilon^{\scriptscriptstyle {k}}_{i,1}= f' ( \mu^{\scriptscriptstyle {k}}_{i,0} ) {\textstyle\sum}_{j=1}^{n^{\scriptscriptstyle {k+1}}} \varepsilon^{\scriptscriptstyle {k+1}}_{j,1} \theta^{\scriptscriptstyle {h}}_{j,i} ,  
    \end{align}
    for $k \in \lbrace 1, \ldots , N-1 \rbrace\,.$
\end{lemma}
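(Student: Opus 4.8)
The plan is to transcribe the proof of its convolutional/fully-connected counterpart, Lemma~\ref{lem:pcn-varepsilon-iterative-app}, into the recurrent setting. The essential bookkeeping difference is that, since the recurrent PCN is treated as a depth-$2$ network with $T=2$, the relevant inference moment is frozen at $t=1$, and the role played in the CNN lemma by the spatial layer index $l$ (where the update was locked to the moment $t=l$) is now played by the recurrent-position index $k$, all evaluated at the same inference time. Thus the target is a purely \emph{spatial} recurrence across the hidden layers $k+1 \to k$ at $t=1$, mirroring the BP recurrence~\eqref{eq:error-rnn}, $\delta^k_i = f'(y^k_i)\sum_j \delta^{k+1}_j w^h_{j,i}$. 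Since this identity is exactly the ingredient used in the $k=1$ sub-step of the induction inside the proof of Theorem~\ref{ther:final-equal-rnn-supp}, establishing a single step of the backward recurrence suffices.

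Concretely, I would expand the definition $\varepsilon^k_{i,1} = x^k_{i,1} - \mu^k_{i,1}$. The hypothesis $\varepsilon^k_{i,0}=0$ gives $x^k_{i,0}=\mu^k_{i,0}$, and the recurrent analogue of the propagate-zero Lemma~\ref{lem:pcn-propagate-zero-app} — that the hidden layers farther from the output than $k$ are still at their initial values when layer $k$ is resolved, so that $\mu^k$, being predicted from the layer below via~\eqref{eq:rnn-forward-il}, is unperturbed, $\mu^k_{i,1}=\mu^k_{i,0}$ — reduces the error to the increment of the value node, $\varepsilon^k_{i,1} = \Delta x^k_{i}$. I would then substitute the recurrent term of the inference update~\eqref{eq:rnn-pcn-dotx}: the self-decay term $-\varepsilon^k$ vanishes again by $\varepsilon^k_{i,0}=0$, and the surviving contribution is $\gamma\, f'(x^k_{i,0})\sum_j \varepsilon^{k+1}_{j,1}\theta^h_{j,i}$, namely the error descending from the adjacent upper layer $k+1$ through the shared recurrent matrix $\theta^h$. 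Finally, rewriting $f'(x^k_{i,0})=f'(\mu^k_{i,0})$ and setting $\gamma=1$ yields the stated identity.

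The step I expect to be the main obstacle is justifying that the quantity driving $\Delta x^k_i$ is the adjacent-layer error $\varepsilon^{k+1}_{j,1}$ evaluated \emph{at the same inference time} $t=1$, rather than its vanishing initial value $\varepsilon^{k+1}_{j,0}$. This is precisely where the recurrent case departs from the convolutional one: in Lemma~\ref{lem:pcn-varepsilon-iterative-app} the locking $t=l$ staggered the layers in inference time, so each error first became nonzero at a distinct step and the driving term was read one step earlier; here, by contrast, the weight sharing forces all hidden errors to be established at the single moment $t=1$, so the recurrence couples errors at the same time. I would handle this by ordering the backward sweep over the recurrent chain from $k=N$ downward (equivalently, solving the error equations self-consistently at $t=1$), so that when layer $k$ is processed the value $\varepsilon^{k+1}_{j,1}$ is already fixed, and by checking via the propagate-zero facts that the still-unchanged lower layers and the shared matrix $\theta^h$ contribute nothing spurious — exactly the structure then exploited by the induction on sequence length in Theorem~\ref{ther:final-equal-rnn-supp}.
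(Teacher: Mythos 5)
Your proposal is correct and follows essentially the same route as the paper: the paper's own proof is a one-line reduction to Lemma~\ref{lem:pcn-varepsilon-iterative-app}, noting only that the iteration now runs over the recurrent position $k$ at the fixed inference moment $t=1$ instead of over the layer $l$ at time $t=l$, which is exactly the transcription you carry out (expand $\varepsilon^k_{i,1}=\Delta x^k_i$ via the propagate-zero facts, substitute Eq.~\eqref{eq:rnn-pcn-dotx}, replace $f'(x^k_{i,0})$ by $f'(\mu^k_{i,0})$, set $\gamma=1$). The same-time coupling of $\varepsilon^k_{i,1}$ to $\varepsilon^{k+1}_{j,1}$ that you flag as the main obstacle is indeed the genuine departure from the convolutional case, and your resolution by a backward sweep over the chain from $k=N$ downward is precisely what the paper assumes implicitly when it says in the proof of Theorem~\ref{ther:final-equal-rnn-supp} that the errors are ``computed backwards starting from $k=N+1$''; you make explicit a point the paper leaves tacit.
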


\begin{proof}
    Equivalent to the one of Lemma~\ref{lem:pcn-varepsilon-iterative-app}. The only difference is that in Lemma~\ref{lem:pcn-varepsilon-iterative-app} we iterate over the previous layer $l$ at time $t=l$, while here the iterations happen over the previous recurrent layer $k$ at fixed time $t=1$.
\end{proof}

\section{Proof of the Main Theorem}
In this section, we prove the main theorem of our work, which has already been stated in the main body.
\begin{theorem}
Let $(\bar z,y)$ and $(\bar \zeta,y)$  be two points with the same label $y$, and $\mathcal G: \mathbb R^n \rightarrow \mathbb R$ be a function. Assume that the update $\Delta \bar z$ is computed using BP, and the update $\Delta \bar \zeta$ uses Z-IL with $\gamma = 1$. 
Then, if $\bar z = \bar \zeta$, and we consider a levelled computational graph of $\mathcal G$, we have 
\begin{equation}
    \Delta z_i = \Delta \zeta_i,
\end{equation}
for every $i \leq n$.
\end{theorem}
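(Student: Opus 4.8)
The plan is to mirror the inductive argument used for CNNs and RNNs, but now run it directly on the level structure $S_0,\dots,S_K$ of the levelled graph, whose existence is guaranteed by the preceding theorem. The central object is the prediction error $\varepsilon_{i,t}$, and the goal is to show that, step by step, it reproduces the backpropagated signal $\delta_i$ of Eq.~\eqref{eq:delta_err}. Since $\bar z = \bar\zeta$ and the Z-IL forward pass initialises $x_{i,0}=\mu_i$, both networks share the same forward values, so every partial derivative $\partial\mu_j/\partial x_i$ evaluated at the initial state coincides with the corresponding $\partial g_j/\partial z_i$ used by BP.

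First I would establish a propagation lemma, the graph-level analogue of Lemma~\ref{lem:pcn-propagate-zero-app}: because every vertex of $S_k$ has all of its parents in $S_{k-1}$ (this is precisely the levelled property), and the errors are initialised to zero, the error signal advances exactly one level per inference step. Concretely, for $v_i\in S_k$ one shows $\varepsilon_{i,t}=0$ and $x_{i,t}=x_{i,0}=\mu_i$ for all $t<k$, and moreover $\mu_{i,t}=\mu_i$ for $t\le k$, since the children of $v_i$ live in $S_{k+1}$ and have not yet moved. This freezing is what makes the whole scheme well-defined on an arbitrary DAG.

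Next comes the main induction on the level $k$, establishing $\varepsilon_{i,k}=\delta_i$ for every $v_i\in S_k$. The base case $k=0$ is immediate: the only vertex is $v_{out}$, whose fixed value $x_{out}=y$ gives $\varepsilon_{out,0}=\mu_{out}-y=\delta_{out}$. For the step, I would derive the iterative error identity (the analogue of Lemma~\ref{lem:pcn-varepsilon-iterative-app}) by expanding $\varepsilon_{i,k}=\mu_{i,k}-x_{i,k}$: using the freezing lemma, $\mu_{i,k}=\mu_i=x_{i,k-1}$, so $\varepsilon_{i,k}=-\Delta x_{i,k-1}$, and plugging in the inference update Eq.~\eqref{eq:deltax-cg} with $\varepsilon_{i,k-1}=0$ and $\gamma=1$ yields $\varepsilon_{i,k}=\sum_{j\in P(i)}\varepsilon_{j,k-1}\,\partial\mu_j/\partial x_i$. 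The parents lie in $S_{k-1}$, so the induction hypothesis gives $\varepsilon_{j,k-1}=\delta_j$; matching the derivatives to the frozen forward values turns this into exactly the BP recursion of Eq.~\eqref{eq:delta_err}, i.e.\ $\varepsilon_{i,k}=\delta_i$.

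Finally, for the leaf nodes: every leaf $v_i$ has all of its parents inside a single level, so the level structure selects one and only one step at which Eq.~\eqref{eq:zeta} fires its update, and at that step each incoming error equals $\delta_j$ while the value nodes feeding the derivatives are still at their frozen forward values. Comparing Eq.~\eqref{eq:zeta} (with $\gamma=1$ and the shared learning rate) against the BP update Eq.~\eqref{eq:deltaz} then gives $\Delta\zeta_i=\sum_{j\in P(i)}\delta_j\,\partial g_j/\partial z_i=\Delta z_i$ termwise. I expect the main obstacle to be the bookkeeping of this timing: one must verify that each leaf is touched at precisely the step where all of its incoming errors have already locked onto their $\delta_j$ values and have not yet drifted, which is exactly the synchronisation that a non-levelled graph (e.g.\ the skip connection of Fig.~\ref{fig:res}) destroys, and is why the levelling step is indispensable.
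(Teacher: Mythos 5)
Your proposal is correct and takes essentially the same route as the paper's own proof: your freezing lemma and iterative error identity are precisely Lemmas~\ref{lem:1} and~\ref{lem:2}, and your level-by-level induction showing $\varepsilon_{i,k}=\delta_i$ on $S_k$ followed by the termwise comparison of Eq.~\eqref{eq:zeta} against Eq.~\eqref{eq:deltaz} at $t=d_i$ mirrors the paper's Claims~1 and~2. The only differences are cosmetic (the paper phrases the induction over $d_{max}$ rather than over the level index, and uses the opposite sign convention for $\varepsilon$, so it obtains $\varepsilon_{i,d_i}=+\Delta x_{i,d_i-1}$ where you obtain the negative), neither of which alters the argument.
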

\begin{proof}
As Z-IL acts on the levelled version of $G$, in this proof we consider levelled computational graphs, i.e., graphs where the distance from the top generates a partition of the vertices. We denote $d_i$ the distance  of a vertex $v_i$ to the root vertex $v_{out}$, i.e., $d_i = k$ if $v_i \in S_k$. Furthermore, we denote by $d_{max}$ the maximum distance between the root and any vertex $v_i$, i.e.,  $d_{max} = \max_i d_i$. 

We now divide the proof in two parts, which we call Claim $1$ and Claim $2$. The first part of the proof (i.e., Claim $1$) consists in showing that the errors $\delta_i$ and $\varepsilon_{i,t}$ are equal when $v_i \in S_k$ and $t=k$, which is the time at which the input parameters get updated. Particularly:

\noindent\textbf{Claim $1$:} At any fixed time $t$, we have $\varepsilon_{i,t} = \delta_{i}$ for every $v_i \in S_t$.

\smallskip 
We prove this claim by induction on $d_{max}$. Let us start with the \emph{basic step} $d_{max} = 1$:

We have the output vertex $v_{out}$ and leaf vertices. The value $\mu_{out,t}$ of the output node is given by the elementary function $g_{out}$ defined on all the input variables. Hence, we have 
\begin{equation}
\delta_i = \varepsilon_{i,0} = \mu_{out,t}-y.
\end{equation}
This proves the basic case. Now we move to the \emph{induction step}: let us assume that Claim 1 holds for every computation graph with $d_{max} = m$. 

Let $\mathcal G: \mathbb R^n \rightarrow \mathbb R$ be a function whose computation graph $G(V,E)$ has $d_{max} = m+1$.  For every non-leaf node $v_i$ such that $d_i < m$ and $v_i \in S_t$, we have that $\delta_i = \varepsilon_{i,t}$. Furthermore, note that $\varepsilon_{i,t} = \varepsilon_{i,d_i}$.
\begin{align*}
      \varepsilon_{i,d_i} & = \sum_{j \in P(i)} \varepsilon_{j,d_i-1} \frac{\partial \mu_{j,d_i}}{\partial x_{i,0}} & \mbox{\ by Lemma~\ref{lem:1},}\\
     \delta_i & = \sum_{j \in P(i)} \delta_j \frac{\partial \mu_{j}}{\partial \mu_{i}} & \mbox{\ by Eq.~\eqref{eq:delta_err}.}
\end{align*}
The two quantities above are equal. This follows from the induction step, which gives $\varepsilon_{j,d_i-1} = \delta_i$ and from the condition that states that $\mu_{i,t} = x_{i,0}$ for $d<d_i$. This concludes the proof of Claim $1$.

\smallskip 
\noindent\textbf{Claim $2$:} We have $\Delta z_i = \Delta \zeta_i$ for every $i \leq n$.

\smallskip 
Eqs.~\eqref{eq:deltaz} and \eqref{eq:deltazeta} state the following:
\begin{align*}
    \Delta z_i & =  \alpha \cdot \sum_{j \in P(i)} \delta_j \frac{\partial \mu_{j}}{\partial z_i},\\
    \Delta \zeta_i & =  \alpha \sum_{j \in P(i)} \varepsilon_{j,t}\frac{\partial \mu_{j,t}}{\partial \zeta_i}.
\end{align*}
The update of every input parameter $\zeta_i$ in Z-IL happens at $t = d_i$. Claim $1$ shows that, at that specific time, we have $\delta_i = \varepsilon_{i,t}$, while Lemma~\ref{lem:2} states that $\mu_{i,t} = \mu_{i,0}$ for every $t \leq d_i$. The proof of the claim, and hence, the whole theorem, follows from $\zeta_i = z_i$ for every $i\leq n$.
\end{proof}

\begin{lemma}\label{lem:1}
Let $\bar \zeta$ be an input of a continuous and differentiable function $\mathcal G: \mathbb R^n \rightarrow \mathbb R$ with computational graph $G(V,E)$, and also assume that the update $\Delta \bar \zeta$ using Z-IL with the partition of $V$ described by Eq.~\eqref{eq:sub}, we then have $\mu_{i,t} = \mu_{i,0}$ and $\varepsilon_{i,t} = 0$ for every $t \leq d_i$.
\end{lemma}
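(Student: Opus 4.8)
The plan is to prove the statement by induction on the inference step $t$, showing that the error signal injected at the clamped output node propagates outward through the levelled graph at a rate of exactly one level per step. The single structural fact that makes this work is that, in a levelled graph, every parent of a vertex in $S_k$ lies in $S_{k-1}$ and every child in $S_{k+1}$; since the inference update of $x_{i,t}$ in Eq.~\eqref{eq:deltax-cg} couples $v_i$ only to its own prediction error $\varepsilon_{i,t}$ and to the errors $\varepsilon_{j,t}$ of its parents, information can advance by at most one level per time step. This is exactly the property the identity-node construction was introduced to guarantee, and the whole argument is the computational-graph analogue of Lemma~\ref{lem:pcn-propagate-zero-app} for CNNs.

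Concretely, I would carry a single joint invariant $W(t)$ through the induction: at the end of step $t$, every vertex $v_i$ with $d_i > t$ still satisfies $x_{i,t} = x_{i,0}$ and $\varepsilon_{i,t} = 0$, while every vertex with $d_i \ge t$ satisfies $\mu_{i,t} = \mu_{i,0}$. The base case $W(0)$ is immediate: the forward pass together with the initialization $x_{i,0} = \mu_{i,0}$ makes $\varepsilon_{i,0} = 0$ at every non-output vertex, so the only nonzero error present is $\varepsilon_{out,0} = \mu_{out,0} - y$. For the step, I would evaluate Eq.~\eqref{eq:deltax-cg} at a vertex with $d_i \ge t+2$: both its own error and all its parents' errors (those parents lie in $S_{d_i-1}$ with $d_i-1 \ge t+1$) vanish by $W(t)$, so $\Delta x_{i,t} = 0$ and the value node stays frozen; feeding this back through Eq.~\eqref{eq:mu-cg}, and using that the children of a vertex with $d_i \ge t+1$ lie in $S_{d_i+1}$ and are therefore also still frozen, yields $\mu_{i,t+1} = \mu_{i,0}$ and hence $\varepsilon_{i,t+1} = 0$ in the appropriate ranges, which is precisely $W(t+1)$. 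Reading off $W(t)$ at each vertex gives $\mu_{i,t} = \mu_{i,0}$ for $t \le d_i$ and $\varepsilon_{i,t} = 0$ for $t < d_i$, i.e.\ the claim.

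The step I expect to be the main obstacle is keeping the three quantities $x_{i,t}$, $\mu_{i,t}$, $\varepsilon_{i,t}$ mutually consistent across one step without circularity: $\mu_{i,t}$ is fixed by the \emph{children} of $v_i$, one level deeper, whereas the motion of $x_{i,t}$ is driven by the \emph{parents}, one level shallower, so the invariant must be stated with the correct endpoint for each quantity and the three must be advanced in the right order (first $x$, then $\mu$, then $\varepsilon$). The resulting off-by-one is genuine and must be respected: a value node first moves at $t = d_i$, when its parents' errors have just switched on, but its prediction $\mu_i$ remains at $\mu_{i,0}$ for one further step, until its children move --- so the $x$/$\varepsilon$ window ($t < d_i$) and the $\mu$ window ($t \le d_i$) are deliberately different. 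Finally I would emphasize that levelledness is essential rather than cosmetic: without the inserted identity vertices a node could have parents at several distinct distances, the wavefront would reach it at several distinct times, the clean vanishing window would break, and with it the downstream identity $\varepsilon_{i,d_i} = \delta_i$ that the main theorem relies on.
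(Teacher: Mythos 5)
Your proof is correct and takes essentially the same approach as the paper's: its one-paragraph argument (every child of $v_i$ lies strictly deeper, so no value node feeding $\mu_{i,t}$ can have moved before $t=d_i$) is a condensed version of your explicit one-level-per-step induction on $t$, which the paper itself only spells out in the CNN analogue (Lemma~\ref{lem:pcn-propagate-zero-app}). You are also right to respect the off-by-one: the lemma's literal claim $\varepsilon_{i,t}=0$ for $t\le d_i$ cannot hold (Lemma~\ref{lem:2} derives a generically nonzero $\varepsilon_{i,d_i}$), and the paper only ever invokes the $t<d_i$ form that you prove.
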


\begin{proof}
This directly follows from the fact that we are applying Z-IL on a levelled graph. In fact, the value $\mu_{i,d_i}$ of every vertex $v_i$ differs from its initial state $\mu_{i,0}$ only if the node values $\{x_{j,t}\}_{j \in C(i)}$ of the children vertices have changed in the time interval $[0,d_i]$. This may only happen if we have $d_j < d_i$ for one of the vertices $\{ v_{j} \}_{j \in C(i)}$. But this is impossible, as the distance from the top $d_i$ of a parent node is always strictly smaller than the one of any of its children nodes in a levelled graph. 
\end{proof}

\begin{lemma}\label{lem:2}
    The prediction error in Z-IL at $t=d_i$, i.e., $\varepsilon_{i,t}$, can be derived from itself at previous inference moments. Formally,
     \begin{align}
        \varepsilon_{i,d_i}= \gamma \sum_{j \in P(i)} \varepsilon_{j,d_i-1} \frac{\partial \mu_{j,d_i}}{\partial x_{i,0}}. 
    \end{align}
\end{lemma}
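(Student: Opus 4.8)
The plan is to derive the stated one-step recursion exactly as in the proof of Lemma~\ref{lem:pcn-varepsilon-iterative-app}, but phrased for an arbitrary levelled DAG rather than a strict layer stack, so that the single ``previous layer'' there is replaced here by the parent set $P(i)$. First I would write the discrete increment of the prediction error: since $\varepsilon_{i,t} = \mu_{i,t} - x_{i,t}$ by Eq.~\eqref{eq:mu-cg}, differencing consecutive inference steps gives
\[
\varepsilon_{i,t} = \varepsilon_{i,t-1} + \Delta\mu_{i,t-1} - \Delta x_{i,t-1},
\]
where $\Delta\mu_{i,t-1} = \mu_{i,t} - \mu_{i,t-1}$. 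Specializing this to $t = d_i$ and invoking Lemma~\ref{lem:1} eliminates two of the three terms: because $v_i \in S_{d_i}$ and the graph is levelled, every child of $v_i$ sits in $S_{d_i+1}$, so all children value nodes are still frozen at their forward-pass values throughout $[0,d_i]$; hence $\mu_{i,t}=\mu_{i,0}$ for $t\le d_i$, giving $\Delta\mu_{i,d_i-1}=0$, while $\varepsilon_{i,d_i-1}=0$ by the same lemma. The increment relation then collapses to $\varepsilon_{i,d_i} = -\,\Delta x_{i,d_i-1}$.

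Next I would expand $\Delta x_{i,d_i-1}$ with the inference update Eq.~\eqref{eq:deltax-cg} taken at step $t=d_i-1$, namely
\[
\Delta x_{i,d_i-1} = \gamma\Bigl(\varepsilon_{i,d_i-1} - \sum\nolimits_{j\in P(i)} \varepsilon_{j,d_i-1}\,\tfrac{\partial\mu_{j,d_i-1}}{\partial x_{i,d_i-1}}\Bigr).
\]
The self-error term drops out since $\varepsilon_{i,d_i-1}=0$, leaving $\varepsilon_{i,d_i} = \gamma\sum_{j\in P(i)}\varepsilon_{j,d_i-1}\,\partial\mu_{j,d_i-1}/\partial x_{i,d_i-1}$, which is already the claimed sum over parents with prefactor $\gamma$. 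The remaining work is bookkeeping on the point at which the Jacobian is evaluated: Lemma~\ref{lem:1} forces $x_{i,t}=\mu_{i,t}=\mu_{i,0}=x_{i,0}$ for $t\le d_i$, and every other argument feeding $\mu_j$ is frozen over the same interval, so $\partial\mu_{j,d_i-1}/\partial x_{i,d_i-1}$ is evaluated at the forward-pass configuration. This is exactly what the notation $\partial\mu_{j,d_i}/\partial x_{i,0}$ in the statement abbreviates, and the two forms coincide.

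I expect the timing argument to be the main obstacle. The clean recursion hinges on the fact that at $t=d_i$ the parents (each at distance $d_j=d_i-1$) have just acquired a nonzero error $\varepsilon_{j,d_i-1}$, while $v_i$ and all of its descendants are still at their initial state; this is precisely the property guaranteed by the levelled structure through Lemma~\ref{lem:1}. In a non-levelled DAG a vertex may be reached by error signals at several distinct times (the skip-connection pathology illustrated with Fig.~\ref{fig:res}), and then $\varepsilon_{i,d_i}$ would mix contributions arriving at different steps, so no single-step recursion of this form would hold. I would also take care with the sign convention in $\varepsilon_{i,t}=\mu_{i,t}-x_{i,t}$ and with confirming that the derivative is taken at the frozen configuration, since these are the only places where an error could slip in.
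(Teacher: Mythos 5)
Your proof is correct and follows essentially the same route as the paper's: the one-step difference relation for $\varepsilon_{i,t}$, elimination of $\varepsilon_{i,d_i-1}$ and $\Delta\mu_{i,d_i-1}$ via Lemma~\ref{lem:1}, expansion of $\Delta x_{i,d_i-1}$ through the inference dynamics of Eq.~\eqref{eq:deltax-cg}, and relocation of the Jacobian to the frozen forward-pass point $x_{i,0}$. If anything, you track the sign convention more consistently than the paper, whose proof writes $\varepsilon_{i,d_i}=+\Delta x_{i,d_i-1}$ and a $+$ inside the update step (implicitly using $\varepsilon = x-\mu$, as in the CNN appendix), two slips that cancel; your $\varepsilon_{i,d_i}=-\Delta x_{i,d_i-1}$ under the main text's convention $\varepsilon_{i,t}=\mu_{i,t}-x_{i,t}$ arrives at the same final formula.
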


\begin{proof}
    Let us write  $\varepsilon_{i,t}$ as a function of $\varepsilon_{i,t-1}$:
    \begin{equation}
        \varepsilon_{i,t} = \varepsilon_{i,t-1} + {(\Delta{x}_{i,t-1} - \Delta{\mu}_{i,t-1})\,,}
    \end{equation}
    where $\Delta{\mu}_{i,t-1}=\mu_{i,t}-\mu_{i,t-1}$.
    Then, we expand $\varepsilon_{i,d_i}$ with the above equation and simplify it with Lemma~\ref{lem:1}, i.e., $\varepsilon_{i,d_i-1}=0$ and $\Delta{\mu}_{i,t<d_i-1}=0$:
    \begin{align}
        \varepsilon_{i,d_i} 
        = \varepsilon_{i,d_i-1} + {(\Delta{x}_{i,d_i-1} - \Delta{\mu}_{i,d_i-1})}
        ={\Delta{x}_{i,d_i-1}}. \label{eq:varepsilon-dotx-cg}
    \end{align}
    We further investigate $\Delta{x}_{i,d_i-1}$ expanded with the inference dynamic Eq.~\eqref{eq:deltax-cg} and simplify it with Lemma~\ref{lem:1}, i.e., $\varepsilon_{i,t<d_i}=0$,
    \begin{align}
            \Delta{x}_{i,d_i-1}
            &= \gamma( \varepsilon_{i,d_i-1} + {\textstyle\sum}_{j \in P(i)} \varepsilon_{j,d_i-1} \frac{\partial \mu_j}{\partial x_{i,d_i-1}}) \\
            &= \gamma {\textstyle\sum}_{j \in P(i)} \varepsilon_{j,d_i-1} \frac{\partial \mu_j}{\partial x_{i,d_i-1}}.
            \label{eq:dotx-iterative}
    \end{align}
    Putting Eq.~\eqref{eq:dotx-iterative} into Eq.~\eqref{eq:varepsilon-dotx-cg}, we obtain:
    \begin{align}
        \varepsilon_{i,d_i} 
        = & \ \gamma \sum_{j \in P(i)} \varepsilon_{j,d_i-1} \frac{\partial \mu_{j,d_i}}{\partial x_{i,d_i-1}} \\
        = & \ \gamma \sum_{j \in P(i)} \varepsilon_{j,d_i-1} \frac{\partial \mu_{j,d_i}}{\partial x_{i,0}}.
    \end{align}
    With Lemma~\ref{lem:1}, $x_{i,d_i-1}$ can be replaced with $x_{i,0}$.
\end{proof}

\end{document}